\newcommandx{\unsure}[2][1=]{\todo[linecolor=red,backgroundcolor=red!25,bordercolor=red,#1]{#2}}
\newcommandx{\change}[2][1=]{\todo[linecolor=blue,backgroundcolor=blue!25,bordercolor=blue,#1]{#2}}
\newcommandx{\info}[2][1=]{\todo[linecolor=OliveGreen,backgroundcolor=OliveGreen!25,bordercolor=OliveGreen,#1]{#2}}
\newcommandx{\improvement}[2][1=]{\todo[linecolor=Plum,backgroundcolor=Plum!25,bordercolor=Plum,#1]{#2}}
\newcommandx{\thiswillnotshow}[2][1=]{\todo[disable,#1]{#2}}
\newtheorem{theorem}{Theorem}
\newtheorem{assumption}{Assumption}
\newtheorem{corollary}{Corollary}
\newtheorem{lemma}{Lemma}
\newtheorem{proposition}{Proposition}
\newtheorem{remark}{Remark}
\numberwithin{equation}{section}
\newcommand{\calB}{\ensuremath{\mathcal{B}}}
\newcommand{\calD}{\ensuremath{\mathcal{D}}}
\newcommand{\calH}{\ensuremath{\mathcal{H}}}
\newcommand{\calF}{\ensuremath{\mathcal{F}}}
\newcommand{\calN}{\ensuremath{\mathcal{N}}}
\newcommand{\calE}{\ensuremath{\mathcal{E}}}
\newcommand{\norm}[1]{\left\|{#1}\right\|}
\newcommand{\abs}[1]{\left|{#1}\right|}
\newcommand{\expec}{\ensuremath{\mathbb{E}}}
\newcommand{\prob}{\ensuremath{\mathbb{P}}}
\definecolor{asparagus}{rgb}{0.53, 0.66, 0.42}
\newcommand{\indic}{\ensuremath{\mathbf{1}}} 
\newcommand{\R}{\ensuremath{\mathbb{R}}}
\newcommand{\la}{\langle}
\newcommand{\ra}{\rangle}
\newcommand{\wsig}{w_{\text{sig}}}
\newcommand{\wpe}{w_{\perp}}
\newcommand{\upper}{\norm{Qw^*}\|Q^{1/2}w^*\|^{-1}\|Q^{1/2}\|^{-1}_F}
\newcommand{\mS}{\ensuremath{\mathbb{S}}}
\begin{document}

% If your paper is accepted and the title of your paper is very long,
% the style will print as headings an error message. Use the following
% command to supply a shorter title of your paper so that it can be
% used as headings.
%
%\runningtitle{I use this title instead because the last one was very long}

% If your paper is accepted and the number of authors is large, the
% style will print as headings an error message. Use the following
% command to supply a shorter version of the authors names so that
% they can be used as headings (for example, use only the surnames)
%
%\runningauthor{Surname 1, Surname 2, Surname 3, ...., Surname n}

\twocolumn[

\aistatstitle{Learning a Single Index Model from Anisotropic Data with Vanilla Stochastic Gradient Descent}

\aistatsauthor{ Guillaume Braun \And Ha Quang Minh  \And Masaaki Imaizumi }

\aistatsaddress{ RIKEN AIP \And  RIKEN AIP   \And RIKEN AIP \\ University of Tokyo } ]

\begin{abstract}
We investigate the problem of learning a Single Index Model (SIM)—a popular model for studying the ability of neural networks to learn features—from anisotropic Gaussian inputs by training a neuron using vanilla Stochastic Gradient Descent (SGD). While the isotropic case has been extensively studied, the anisotropic case has received less attention and the impact of the covariance matrix on the learning dynamics remains unclear. For instance, \cite{anisoSIM} proposed a spherical SGD that requires a separate estimation of the data covariance matrix, thereby oversimplifying the influence of covariance. In this study, we analyze the learning dynamics of vanilla SGD under the SIM with anisotropic input data, demonstrating that vanilla SGD automatically adapts to the data's covariance structure. Leveraging these results, we derive upper and lower bounds on the sample complexity using a notion of effective dimension that is determined by the structure of the covariance matrix instead of the input data dimension.
%
%using the covariance structure, leading to %dimension-free
%bounds on sample complexity. 
Finally, we validate and extend our theoretical findings through numerical simulations, demonstrating the practical effectiveness of our approach in adapting to anisotropic data, which has implications for efficient training of neural networks.

%Our analysis shows that, unlike spherical SGD -- which is commonly used for theoretical analysis and requires estimating the covariance matrix $Q \in \R^{d \times d}$ -- vanilla SGDcan naturally adapt to the covariance structure of the data without additional modifications. Our key theoretical contribution is a dimension-free upper bound on the sample complexity, which depends on $Q$, its alignment with the single index $w^*$, and the information exponent $k^*$. We complement this upper bound with a Correlated Statistical Query (CSQ) lower bound that matches the upper bound on average over $w^*$, although it is suboptimal in $k^*$. Finally, we validate and extend our theoretical findings through numerical simulations, demonstrating the practical effectiveness of vanilla SGD in this context.
\end{abstract}

\section{Introduction}
In many high-dimensional applications, data sets are often assumed to have an underlying low-dimensional structure (e.g., images or text can be embedded in low-dimensional manifolds). This assumption provides a way to circumvent the curse of dimensionality. 

The Single Index Model (SIM) provides a simple yet powerful statistical framework to evaluate the ability of an algorithm to adapt to the latent dimension of the data. In SIM, the response variable $y \in \R$ is linked to the covariates $x \in \R^d$ through a \emph{link function} $f$ that depends only on a rank-one projection of $x$. More formally, $y=f(\la w^*, x \ra)$, where $w^*\in \R^d$ represents the direction of the latent low-dimensional space and is referred to as the \emph{single-index}. This versatile model generalizes the well-known Generalized Linear Model (GLM) when the link function is known. Additionally, SIM can be extended to capture multiple directions, leading to the Multi-Index Model \citep{AbbeAM23, oko24a_ridge}.

In recent years, SIM has become a popular generative model for studying the ability of neural networks trained with SGD or a variant.
This is because the usage of SIM effectively adapts to the latent dimension of the data, in contrast to the kernel method \citep{mei20, damian22a}. 
In particular, the isotropy of the covariates $x$ plays an important role:
when $x$ are sampled from an isotropic Gaussian distribution, it is well-known that the difficulty of estimating $w^*$ depends on $d$ and the information exponent associated with $f$ \citep{Arous2020OnlineSG, bietti2022learning} in the online setting, or the generative exponent if one can reuse samples \citep{damian2024computationalstatistical, repetita24, lee2024neural}.
%For instance, it has been shown in recent years that neural networks can effectively adapt

%The Single Index Model (SIM) provides a simple yet powerful statistical framework to evaluate the ability of an algorithm to adapt to the latent dimension of the data. In SIM, the response variable $y \in \R$ is linked to the covariates $x \in \R^d$ through a \emph{link function} $f$ that depends only on a rank-one projection of $x$. More formally, $y=f(\la w^*, x \ra)$, where $w^*\in \R^d$ represents the direction of the latent low-dimensional space and is referred to as the \emph{single-index}. This versatile model generalizes the well-known Generalized Linear Model (GLM) when the link function is known. Additionally, SIM can be extended to capture multiple directions, leading to the Multi-Index Model \citep{AbbeAM23}.

%The SIM and its variants have been extensively studied when the covariates $x$ are sampled from an isotropic Gaussian distribution. In particular, it is known that the difficulty of estimating $w^*$ in an online setting depends on the information exponent $k^*$ associated with the link function \citep{Arous2020OnlineSG}: the required sample complexity is of order $O(d^{\lfloor k^*/2 \rfloor\vee 1})$. If one can reuse samples, the sample complexity will depend on the generative exponent \citep{damian2024computationalstatistical, repetita24}.

However, while anisotropy of input data is common in real-world applications such as classification, only a few works extend beyond the simplifying assumption of isotropic Gaussian data, limiting their applicability to more realistic, complex scenarios. 
To overcome this limitation, several works handle the anisotropy of $x$, for instance, \cite{beyongG24} extends the SIM analysis to inputs generated from an approximately spherically symmetric distribution. \cite{ba2023learning, anisoSIM} consider a setting where the inputs are sampled from a Gaussian distribution with a covariance matrix having a spike aligned with the single-index $w^*$. The first work proposes a layer-wise training method, whereas the second work studies a version of spherical SGD that requires an estimate of the covariance matrix of $x$. To the best of our knowledge, the only algorithm that comes with general theoretical guarantees is the mean-field Langevin dynamic analyzed by \cite{wu24langevin}. Unfortunately, it is impractical due to computational inefficiencies. 

In practice, the structure of the covariance matrix is often unknown, and rather than specialized algorithms, simple and generic methods such as vanilla SGD are typically employed. This observation motivates our central research question:
\begin{center}
    \textit{``Can vanilla SGD learn a SIM model under a general class of covariance structures?''}
\end{center}
Answering this question will not only demonstrate the learnability of non-isotropic covariance structures but also highlight that widely-used, simple algorithms like vanilla SGD can successfully achieve this goal. Moreover, this work represents a first step toward extending the analysis to more complex input data, such as Gaussian mixtures or functional data \citep{functionalSIM24}.

% extend to show the learnability of SIM models under more complex input data, such as Gaussian mixture, spiked covariance, or .

In this study, we analyze the vanilla SGD to learn a SIM from general anisotropic inputs and show its adaptability to general covariance $Q$.  This contrasts with the result of \cite{anisoSIM} that shows spherical SDG fails if the algorithm is not modified to incorporate information about $Q$. Specifically, we show that our estimator has a constant correlation with the single-index $w^*$ after $T$ SGD iterations and characterize $T$ as a function of $Q$, the alignment of $Q$ with $w^*$ and the information exponent of the link function $f$. Interestingly, our bound depends on an effective dimension determined by $Q$, instead of the input data dimension $d$. We also establish a Correlated Statistical Query (CSQ) lower bound, suggesting that our effective measure of the dimension is correct on average over $w^*$. We illustrate and complement our theoretical findings through numerical simulations. 

One of the main technical challenges is that, unlike spherical SGD, the evolution of the correlation with $w^*$ also depends on the evolution of the norm of weights. We tackle this problem by developing a method that simultaneously controls the evolution of all the parameters. Our analysis provides insight into the interplay between correlation and weight evolution, which could be instrumental in understanding the training dynamics of wider neural networks.

%Motivated by potential extensions to functional data \citep{functionalSIM24}, \gb{and classification problems where input data are usually non-symmetric,} this work focuses on the general anisotropic Gaussian setting where $x \sim \calN(0, Q)$ with covariance matrix $Q \in \R^{d \times d}$. Contrary to previous work \citep{anisoSIM, beyongG24, Arous2020OnlineSG, AbbeAM23}, which leverages variants of spherical Stochastic Gradient Descent (SGD) to learn $w^*$, we analyze the performance of vanilla SGD. As shown by \cite{anisoSIM} (Theorem 2), when the data are anisotropic, the standard version of spherical SGD \citep{Arous2020OnlineSG} will fail to estimate $w^*$ and one must modify the algorithm by incorporating some information about $Q$ to succeed. In contrast, we demonstrate that vanilla SGD can still succeed in the anisotropic setting without estimating $Q$. By removing the constraints on the norm of the weights at each gradient step, the weights naturally adapt to the geometry induced by $Q$. This finding not only advocates for the use of vanilla SGD instead of more specialized algorithms but also opens up further studies into its properties to better understand why it is so effective in training neural networks, even in challenging scenarios like anisotropic data.

\subsection{Related work}

\paragraph{Single Index Model.} As an extension of the Generalized Linear Model (GLM) \citep{Neld:Wedd:1972}, the Single Index Model (SIM) is a versatile and widely used statistical framework. It has been applied in various domains, including longitudinal data analysis \citep{SIMlong}, quantile regression \citep{SIMquantile}, and econometrics \citep{SIMsmooth}. In recent years, SIMs have attracted increasing attention from the theoretical deep learning community, particularly as a tool to evaluate the ability of neural networks (NNs) to learn low-dimensional representations, often referred to as features, in contrast to kernel methods \citep{mei20}. This distinction is highlighted by the \emph{lazy regime} \citep{jacotNTK}, where neural networks exhibit kernel-like behavior but cannot learn features.

Several works demonstrate that a Single or Multi-Index Model can be learned from isotropic Gaussian inputs by training a two-layer neural network in a layer-wise manner \citep{damian22a, mousavi-hosseini2023neural, bietti2022learning, bietti2023learning, AbbeAM23, zhou2024doesgradientdescentlearn}. However, fewer studies have addressed the more challenging anisotropic case. The most closely related work is \cite{anisoSIM}, which analyzes a specific covariance structure $Q= \frac{I_d+\kappa \theta \theta^\top}{1+\kappa}$ where $\theta \in \mS^{d-1} $ is correlated with target index $w^*$ and $\kappa$ measures the intensity of the spike in the direction $\theta$. In this setting, they show that spherical Gradient Flow (GF) fails to estimate $w^*$, but a modified algorithm using normalization depending on the covariance matrix $Q$ succeeds. In contrast, our work analyzes vanilla SGD and demonstrates that this simpler algorithm succeeds in learning $w^*$ without any prior estimation of 
$Q$, making it agnostic to the covariance matrix. Also, our analysis sheds light on how the learning rate $\eta$ should be chosen, whereas the GF framework is not directly implementable.

%\cite{anisoSIM} that considered a specific covariance matrix that can be written as $\frac{I_d+\kappa \theta \theta^\top}{1+\kappa}$ where $\theta \in \mS$ is correlated with the single index $w^*$ and $\kappa$ measure the intensity of the spike in direction $\theta$. They showed that spherical Gradient Flow (GF) fails at estimating $w^*$, but the algorithm can be modified using a different normalization depending on an estimate of the covariance matrix $Q$. In contrast, we analyze vanilla SGD and show that this simple algorithm succeeds at learning weakly $w^*$. Furthermore, our algorithm is agnostic to the covariance matrix.

%\paragraph{Learning SIM from non-isotropic inputs.} Only a

\paragraph{Training dynamic.}  The dynamics of SGD and its variants in training shallow neural networks have been extensively studied \citep{mei_mf18, jacotNTK}. Our analysis builds on the framework introduced by \cite{Arous2020OnlineSG} for spherical SGD in the online setting, which provides insight into the behavior of the estimator’s weights over time. Additionally, our approach is related to the recent work by \cite{glasgow2024sgd}, which studies the simultaneous training of a two-layer neural network to learn the XOR function. Similar to their method, we project the weight vector onto signal and noise components to control their respective growth during training. Furthermore, recent studies on the edge of stability (EoS) phenomenon in quadratic models \cite{catapultSGD24,zhu2024quadratic, ChenBGA24} show that two-layer neural networks trained with a larger learning rate than the prescribed one can generalize well after an unstable training phase.

One of the primary goals of our study is to focus on vanilla SGD, a practical algorithm widely used in real-world scenarios, rather than on specialized theoretical variants like spherical SGD. We believe that this work contributes to a deeper understanding of SGD's behavior in non-convex optimization problems and highlights its effectiveness in solving complex learning tasks involving anisotropic data.

\subsection{Notations}
We use $\norm{\cdot}$ and $\la \cdot, \cdot \ra$ to denote the Euclidean norm and scalar product, respectively. When applied to a matrix, $\norm{\cdot}$ refers to the operator norm. Any positive semi-definite matrix $Q$ induces a scalar product defined by $\la x,y \ra_Q=x^\top Q y$. The Frobenius norm of a matrix $A$ is denoted by $\norm{A}_F$. The $d \times d$ identity matrix is represented by $I_d$. The standard Gaussian measure on $\R$ is denoted by $\gamma$ and the corresponding Hilbert space $L^2(\R, \gamma)$ is referred to as $\calH$. The $(d-1)$-dimensional unit sphere is denoted by $\mS^{d-1}$.
We use the notation $a_n \lesssim b_n$ (or $a_n \gtrsim b_n$) for sequences $(a_n)_{n \geq 1}$ and $(b_n)_{n \geq 1}$ if there exists a constant $C > 0$ such that $a_n \leq C b_n$ (or $a_n \geq C b_n$) for all $n$. If the inequalities hold only for sufficiently large $n$, we write $a_n = O(b_n)$ (or $a_n = \Omega(b_n)$).
\section{Learning a SIM}
\paragraph{Model.} We observe for $i=1 \ldots T$ i.i.d. samples $(x^{(i)},y^{(i)})\in \R^{d}\times [-1,1]$ generated by the following process: the inputs $x^{(i)}\sim \calN(0,Q)$ are generated from a Gaussian distribution with covariance matrix $Q$.
Also, $y^{(i)}$ follows the SIM with given $x^{(i)}$ as
\begin{align}
y^{(i)}=f\left( \left\langle x^{(i)}, \frac{w^*}{\norm{Q^{1/2}w^*}} \right\rangle \right),    
\end{align} 
where $w^*\in \mS^{d-1}$ and $f:\R\to [-1,1]$ is an unknown link function. The normalization factor $\norm{Q^{1/2}w^*}$ is introduced so that $\langle x^{(i)}, \frac{w^*}{\norm{Q^{1/2}w^*}}\rangle \sim \calN(0,1)$ and only affects the definition of $f$. 

\begin{assumption}\label{ass:q}
    We assume that $\norm{Q}=1$ and that $\norm{Q^{1/2}w^*}=c^*\leq 1$ is of constant order.
\end{assumption}

This assumption is satisfied for both \( Q = I \) and a spiked covariance matrix of the form \( Q = (1+\kappa)^{-1}(I+\kappa \theta \theta^\top) \), where \( \theta \in \mS^{d-1} \) is such that \( \langle \theta , w^* \rangle \) is of constant order. While we believe our analysis could be extended to the setting where \( \norm{Q^{1/2}w^*} = o(1) \), the scarcity of input in the direction of \( w^* \) makes this more challenging. In such cases, it would be more efficient to correct the sample by estimating \( Q \). %Additionally, this setting complicates the analysis.

\paragraph{Learning method.} Following the approach in \cite{Arous2020OnlineSG}, we employ the correlation loss function \[ L(y, \hat{y})= 1 -y\hat{y}.\]  
Our method involves training a single neuron, defined as $$f_w(x)=\sigma(w^\top x)$$ with $w\in \R^d$ and $\sigma$ denotes the ReLU activation function, using vanilla gradient descent. 
The steps are as follows:
\begin{itemize}
  \setlength{\parskip}{0cm} % 段落間
  \setlength{\itemsep}{0cm} % 項目間
    \item Sample $w'\sim \calN(0,I_d)$ and initialize $w^{(0)}=r \frac{w'}{\norm{w'}}$ where $r>0$ is a scaling parameter.
    \item Update the weight by SGD: $w^{(t+1)}=w^{(t)}-\eta \nabla_{w^{(t)}}L(y^{(t)}, f_{w^{(t)}}(x^{(t)}))$ where $\eta>0$ is the learning rate.
\end{itemize}

Vanilla gradient descent without constraining the weights provides two key advantages: (i) computational efficiency, as it eliminates the need to estimate the covariance matrix $Q$ and compute terms like $\norm{Q^{1/2}w^*}$, and (ii) simplicity, aligning more closely with standard practices in neural network training. 

\begin{remark} 
Previous works such as \cite{Arous2020OnlineSG, bietti2022learning, AbbeAM23} leverage spherical SGD to control the norm of the weights $w^{(t)}$ at each iteration, simplifying theoretical analysis. However, in the anisotropic setting, spherical SGD must be modified with knowledge of the covariance matrix $Q$ to succeed, as shown by \cite{anisoSIM}.
\end{remark}
\section{Main Results}
%Given the insight gained from the dynamic of the population gradient, we can state our main result. 
Before stating our main results, we will introduce additional notations and discuss the required assumptions.

\subsection{Notation and Assumptions}

First, we assume that $f$ has an information exponent $k^*\geq 1$ and is normalized and bounded.
 \begin{assumption} \label{ass:ie}We assume that the link function $f: \R \to [-1,1]$ is such that $\expec(f)=0$, $\expec (f^2)=1$ and  $f$ has information exponent $k^*$ defined as  \[ k^* := \min \lbrace k\geq 1 : \expec (f H_k) \neq 0 \rbrace\] where $H_k$ is the order $k$ Hermite polynomial (see Section \ref{app:hermite} for more details).     
 \end{assumption}

\begin{remark}
    We assumed for simplicity that $f$ takes value in $[-1,1]$. Our analysis could be extended to the class of functions $f$ growing polynomially, i.e., there exists a constant $C>0$ and an integer $p$ such that for all $x\in \R$, $|f(x)|\leq C(1+|x|^p)$. For example, using Lemma 16 in \cite{anisoSIM}, one could obtain a uniform upper bound on $|y^{(i)}|$ and reduce to the case where $f(x)\in [-1,1]$ for all $x$. We leave the full proof to future work.
\end{remark}

The following measure of correlation is defined in terms of the scalar product induced by $Q$ \begin{equation*}\label{eq:defmt}
    m_t :=  \left\la \frac{Q^{1/2}w^{(t)}}{\norm{Q^{1/2}w^{(t)}}}, \frac{Q^{1/2}w^*}{\norm{Q^{1/2}w^*}} \right\ra.
\end{equation*} 
This is in contrast to the isotropic setting, in which the correlation between the weights $w^{(t)}$ and the signal $w^*$ is measured by $\left \la \frac {w^{(t)}}{\norm{w^{(t)}}}, w^*\right\ra$.

As in previous work, we also assume that the correlation is positive at initialization. 
\begin{assumption}\label{ass:init} At initialization,  $m_0>0$.
\end{assumption}

Furthermore, we will need the following assumption to control the population gradient (see the proof of Lemma \ref{lem:evol_mt}).
%in contrast to the isotropic setting, it is not straightforward how to express the expectation of the gradient as a function of $m_t$. Lemma \ref{lem:evol_mt} addresses this point, relying on the following assumption:

 \begin{assumption}\label{ass:coeff} Let $xf(x)=\sum_{k\geq k^*-1}c_k \frac{H_k(x)}{\sqrt{k!}}$ and $\sigma'(x) = \sum_{k\geq 0}b_k \frac{H_k}{\sqrt{k!}}$ be the Hermite basis decomposition of $xf(x)$ and $\sigma'(x)$. Assume that there are constants $\gamma'>0$, $c>0$  such that for all $x\in [0,\gamma']$, \[ \sum_{k\geq k^*-1}b_kc_kx^{k} \leq -cx^{k^*-1}.\] 
 \end{assumption}

 This assumption allows us to approximate the Hermite decomposition of the gradient by its first non-zero term. A similar assumption was used in \cite{Arous2020OnlineSG} and \cite{anisoSIM}. The relation between the information exponent of $f$ and $x\to xf(x)$ is derived in Proposition \ref{app:lem_ie}.
 
 \paragraph{Example of link function.}Recall that $b_0=0.5$, $b_{2m}=0$ and $b_{2m+1}=\frac{(-1)^m}{\sqrt{2\pi}m!2^m(2m+1)}$ \citep{damian22a}. If $k^*$ is even, we can choose $f^*=\frac{H_{k^*}(x)}{\sqrt{k^*!}}$. The proof of Lemma \ref{app:lem_ie} shows that $xf^*(x)=\frac{H_{k^*+1}(x)}{\sqrt{k^*!}}+\frac{kH_{k^*-1}(x)}{\sqrt{(k^*)!}}$. Consequently, there are only two non-zero coefficients in the Hermite decomposition of $xf(x)$, corresponding to odd Hermite polynomial. Hence Assumption \ref{ass:coeff} is satisfied for small enough $x$, up to a sign factor. 

\begin{remark} One could possibly remove Assumptions \ref{ass:coeff} and \ref{ass:init} by considering a two-layer neural network. Since the assumption on initialization is satisfied with probability $1/2$, it will be satisfied by a constant proportion of the neurons. Moreover, by training the second layer, we could approximate $f$, hence controlling the sign of the coefficients appearing in Assumption \ref{ass:coeff}. However, to show that Assumption \ref{ass:coeff} is satisfied in the two-layer neural network setting, previous work \citep{bietti2022learning, lee2024neural} rely on specific (randomized) link functions, while our analysis relies crucially on the homogeneity of the ReLU activation function. 
\end{remark}

\subsection{Upper-bound on the required sample complexity}
%\gb{The theorem is stated for $k^*\geq 3$ but can be extended to any $k^*$. Also there is a gap with \cite{Arous2020OnlineSG} where there are additional $\log d$ factors. Check carefully.}

We analyze the upper bound on the sample complexity required to recover the single-index $w^*$.
To simplify its statement, we introduce the following notation for a ratio:
\begin{align}
    \Theta := \Theta(Q,w^*) := \frac{\norm{Q^{1/2}w^*}}{\norm{Q^{1/2}}_F} .
\end{align}
%\gb{It is easier to choose $\eta$ in function of $T$ for the analysis}
 \begin{theorem}\label{thm:main} Assume that Assumptions \ref{ass:q}, \ref{ass:ie}, \ref{ass:init} and \ref{ass:coeff} hold and the initialization scaling $r$ is such that $\norm{Q^{1/2}w^{(0)}}=c_r\norm{Q^{1/2}w^*}$ for some constant $c_r\in (0,1]$. 
 \begin{enumerate}[noitemsep, topsep=-\parskip, label=(\arabic*)]
     \item When $k^*\geq 3$, choose $\eta=\epsilon_d^2 m_0^{k^*-2} \Theta^2$ where $\epsilon_d \to  0$ as $d\to \infty$.
 Then, after $T= \epsilon_d^{-2} m_0^{2(2-k^*)}\Theta^{-2}$ iterations, $w^{(T)}$ weakly recovers $w^*$, i.e. with probability $1-o(1)$, $m_T\geq \delta $ for some constant $\delta>0$.
 \item If $k^*=1$, the same result holds with the choices $\eta= \epsilon_d^{2} \Theta^2$ and $T= \epsilon_d^{-2} \Theta^{-2}$.
 \item  If $k^*=2$, the result holds with the choices $\eta= \epsilon_d^2 (\log m_0 )^{-1} \Theta^2$ and $T= \epsilon_d^{-2} \log^2 (m_0)\Theta^{-2}$.
 \end{enumerate}
% If we further assume that Assumption \ref{ass:coeff} holds for $\gamma= 1$, then $w^{(T)}$ strongly recovers $w^*$, i.e. $m_T\geq 1-o(1)$ with probability $1-o(1)$.
 \end{theorem}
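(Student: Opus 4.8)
The plan is to track the two coupled scalar dynamics generated by vanilla SGD: the correlation $m_t$ (measured in the $Q$-geometry) and the weight norm $\norm{Q^{1/2}w^{(t)}}$. First I would decompose each stochastic update into a population (drift) term and a zero-mean noise term, following the split used in \cite{Arous2020OnlineSG} and \cite{glasgow2024sgd}. Using Lemma \ref{lem:evol_mt} (which expresses the population gradient's effect on $m_t$), Assumption \ref{ass:coeff} lets me lower-bound the drift of $m_t$ by a term of order $\eta\, m_t^{k^*-1}/\norm{Q^{1/2}w^{(t)}}$ up to the normalization by $\norm{Q^{1/2}}_F$, i.e. essentially $\eta\,\Theta\, m_t^{k^*-1}$ once the norm is controlled. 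Symmetrically, I would write a recursion for $\norm{Q^{1/2}w^{(t)}}^2$ and show that, with the stated choice of $r$ giving $\norm{Q^{1/2}w^{(0)}} = c_r\norm{Q^{1/2}w^*}$, the norm stays within a constant factor of $\norm{Q^{1/2}w^*}$ throughout the relevant horizon — this is what makes the drift of $m_t$ effectively autonomous and reduces the analysis to the familiar information-exponent ODE $\dot m \asymp \eta\,\Theta\, m^{k^*-1}$.

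Next I would handle the three regimes separately, mirroring the classical phase analysis. For $k^*\ge 3$, the bottleneck is escaping the region near $m_t\approx m_0$ where the drift is weakest; solving the comparison ODE gives escape time $\asymp (\eta\,\Theta\, m_0^{k^*-2})^{-1}$, and substituting $\eta = \epsilon_d m_0^{k^*-2}\Theta^2$ yields $T \asymp \epsilon_d^{-1}\Theta^{-2}$ — I would reconcile this with the stated $T=\Omega(m_0^{2(2-k^*)}\Theta^{-2})$ (the extra $m_0$ factors come from a more careful treatment of the initial stagnation phase). For $k^*=2$ the drift is linear in $m_t$, giving geometric growth and a logarithmic-in-the-gap escape time absorbed into $\epsilon_d^{-1}$; for $k^*=1$ the drift has a constant component, and the $(\log m_0)^{-1}$ correction in $\eta$ compensates the initial scale. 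In each case I need the accumulated noise over $T$ steps to be dominated by the signal growth: a martingale/Azuma or Freedman-type concentration bound on $\sum_t (\text{noise}_t)$, using the sub-Gaussianity of $x^{(t)}\sim\calN(0,Q)$ and the boundedness $|y^{(t)}|\le 1$, shows the noise per step scales like $\eta/\norm{Q^{1/2}}_F \asymp \eta\sqrt{\Theta}/\norm{Q^{1/2}w^*}$ in the relevant normalized coordinate, and $\eta\to$ small via $\epsilon_d\to 0$ forces the total noise to vanish relative to the drift. This delivers weak recovery $m_T\ge\delta$ with probability $1-o(1)$.

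For the final claim — strong recovery under Assumption \ref{ass:coeff} with $\gamma=1$ — I would run a second, continuation phase starting from $m_T\ge\delta$. Since the assumption now gives a strictly negative effective drift pushing $m_t$ toward $1$ on the whole interval $[0,1]$ (equivalently, the correlation loss decreases), the dynamics contract: I would show $1-m_t$ obeys a recursion of the form $1-m_{t+1}\le (1-c'\eta\,\Theta)(1-m_t) + (\text{noise})$, so $1-m_t$ decays geometrically until it hits the noise floor, which is $o(1)$ by the same concentration argument. Combining, $m_T\ge 1-o(1)$ with probability $1-o(1)$.

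The main obstacle is the coupling between $m_t$ and the weight norm: unlike spherical SGD, where the norm is fixed, here I must simultaneously show the norm does not blow up or collapse (either of which would distort the effective drift on $m_t$ and break the $\Theta$-scaling) while controlling $m_t$ — and the noise terms in the two recursions are correlated, so the concentration argument must be carried out jointly rather than for each coordinate in isolation. Getting the norm-control and the noise bound to hold uniformly over the full horizon $T\asymp\epsilon_d^{-1}\Theta^{-2}$, which grows as $\epsilon_d\to 0$, is the delicate quantitative heart of the proof.
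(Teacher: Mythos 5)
Your plan follows the paper's proof essentially step for step: the drift/noise decomposition, Lemma \ref{lem:evol_mt} plus Assumption \ref{ass:coeff} to get the recursion $m_{t+1}\ge m_t+\tilde\eta\, m_t^{k^*-1}$ once $\|Q^{1/2}w^{(t)}\|\approx\|Q^{1/2}w^{(0)}\|$ is established, Doob-type martingale bounds on the accumulated noise (the terms $E_1,E_2$) over the horizon $T$, and a separate descent phase from $m_T\ge\delta$ for strong recovery. The only discrepancy is bookkeeping in the $k^*\ge 3$ case: the stated $T=\Omega(m_0^{2(2-k^*)}\Theta^{-2})$ already falls out of $T\asymp(\tilde\eta\, m_0^{k^*-2})^{-1}$ with the prescribed $\eta$, rather than requiring an extra stagnation-phase correction as you suggest, but you correctly flag this as a point to reconcile.
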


This theorem gives conditions on the sample complexity and the learning rate to ensure that vanilla SGD weakly recovers $w^*$. The proof of this theorem can be divided into two parts. First, we analyze the population dynamic in Section \ref{sec:pop_dyn}. Then, we control the effect of the noise in Section \ref{sec:emp_dyn}. We discuss the extension to strong consistency, i.e., obtaining $m_t\to 1$ in Section \ref{app:descent_phase} in the appendix. 

 \begin{remark}
     The typical order of magnitude of $m_0$ is $\upper$, as shown by concentration inequalities. See Section \ref{app:init} in the appendix. In particular, when $Q=I_d$, $m_0\approx \sqrt{d}^{-1}$ and our upper bounds matches the one obtained by \cite{Arous2020OnlineSG}. When the covariance matrix is aligned with $w^*$, one can have $\sqrt{d}^{-1}\ll m_0$. This accelerates the convergence, as experimentally shown in Section \ref{sec:xp}.
 \end{remark}

\begin{remark}Our bound is, in general, weaker than the one obtained by \cite{anisoSIM} that is of order $dm_0^{(2-2k^*)}$. However, their analysis is based on gradient flow and hence cannot be implemented directly, whereas we analyzed a simple and popular algorithm used in practice. The discretization of the gradient flow is not a straightforward task. As highlighted by our analysis, the choice of the learning rate is crucial and has an effect on the required sample complexity. We believe this is the main reason one can obtain a better bound with gradient flow. Notice, however, that when $Q$ is approximately low-rank, one could have $T\ll d$ whereas the bounds of \cite{anisoSIM} are always at least linear in $d$. 
\end{remark}

\subsection{Correlated Statistical Query (CSQ) lower-bound}

 A common way to provide a lower bound on the required sample complexity for SGD-like algorithms is to rely on the Correlated Statistical Query (CSQ) framework. It is described in Section \ref{app:csq}. 

 \begin{theorem}[CSQ lower-bound]\label{thm:csq} Assume that $\norm{Q^{1/2}}_F^2\gtrsim \norm{Q}_F\sqrt{\log d}$ and $\norm{Q}_F\gtrsim \sqrt{\log d}$. Let us denote \[ v = \min \left( \frac{\norm{Q}_F}{\norm{Q^{1/2}}^2_F}, \frac{1}{\sqrt{d}}\right).\] Then, for any integer $k\geq 1$, there exists a class $\calF_k$ of polynomial functions of degree $k$ such that any CSQ algorithm using a polynomial number of queries $q=O(d^C)$ requires a tolerance of order at most \[ \tau^2 \leq  \epsilon^{k/2}\] where $\epsilon = v\sqrt{\log (qv^{k/2})}$.
 \end{theorem}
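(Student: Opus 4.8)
The plan is to reduce Theorem~\ref{thm:csq} to the standard Correlated Statistical Query lower-bound machinery (as set up in Section~\ref{app:csq}), the only genuinely new ingredient being that near-orthogonality of a hard family of single-index functions must be measured in the $Q$-geometry rather than the Euclidean one. Concretely, I would take $\calF_k = \{ f_v^* : v \in \calV\}$ with $f_v^*(x) = \frac{1}{\sqrt{k!}} H_k\!\left( \frac{\la v, x\ra}{\norm{Q^{1/2}v}}\right)$, where $\calV = \{v_1,\dots,v_N\}\subset\R^d$ is a family of $N = \mathrm{poly}(d)$ directions (e.g.\ i.i.d.\ standard Gaussian vectors). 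Each $f_v^*$ is a degree-$k$ polynomial and trivially satisfies Assumption~\ref{ass:ie} with information exponent exactly $k$, since $\la v, x\ra / \norm{Q^{1/2}v} \sim \calN(0,1)$ under $x\sim\calN(0,Q)$ and $H_k$ is the normalized Hermite polynomial.

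The core computation is the $L^2(\calN(0,Q))$ Gram matrix of $\calF_k$. Writing $\rho_{vv'} = \frac{v^\top Q v'}{\norm{Q^{1/2}v}\,\norm{Q^{1/2}v'}}$ for the correlation of the standard Gaussians $\la v,x\ra/\norm{Q^{1/2}v}$ and $\la v',x\ra/\norm{Q^{1/2}v'}$, Proposition~\ref{prop:herm_cor} gives $\expec_{x\sim\calN(0,Q)} f_v^*(x) f_{v'}^*(x) = \rho_{vv'}^{\,k}$ and $\expec (f_v^*)^2 = 1$, so it remains to bound $\max_{v\neq v'\in\calV} |\rho_{vv'}|$. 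For Gaussian $v,v'$, conditionally on $v'$ the numerator $v^\top Q v'$ is centered Gaussian with variance $v'^\top Q^2 v'$, which concentrates around $\Tr(Q^2)=\norm{Q}_F^2$, while the denominators $\norm{Q^{1/2}v}^2$ concentrate around $\Tr(Q)=\norm{Q^{1/2}}_F^2$. A union bound over the $\binom{N}{2}$ pairs (Hanson--Wright for the quadratic forms, with the assumptions $\norm{Q^{1/2}}_F^2 \gtrsim \norm{Q}_H\sqrt{\log d}$ and $\norm{Q}_F \gtrsim \sqrt{\log d}$ ensuring the deviation terms are lower order) yields, with probability $1-o(1)$, $\max_{v\neq v'}|\rho_{vv'}| \lesssim \frac{\norm{Q}_F\sqrt{\log N}}{\norm{Q^{1/2}}_F^2} \asymp m\sqrt{\log N}$; I then fix one realization of $\calV$ achieving this.

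With the Gram matrix controlled, the generic CSQ argument finishes the proof: its off-diagonal entries are at most $\beta := \left(Cm\sqrt{\log N}\right)^k$, hence its operator norm is at most $1 + N\beta$, so for any query $h$ with $\norm{h}_{L^2}\le 1$ the number of targets $f_v^*$ with $|\la f_v^*, h\ra| > \tau$ is at most $(1+N\beta)/\tau^2$. A CSQ oracle answering $0$ to every query is then consistent with all but $q(1+N\beta)/\tau^2$ of the targets, so any algorithm that weakly recovers $w^*$ in $q$ queries must satisfy $q(1+N\beta)/\tau^2 \ge N$, i.e.\ $\tau^2 \le q\beta + q/N$. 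Choosing $N$ to be the largest polynomial in $d$ compatible with the concentration step and balancing the two terms (this is where $\log N$ turns into $\log(q m^{k/2})$ and the effective exponent $k/2$ emerges, after invoking $q=O(d^C)$ and the assumed lower bounds on $m$) gives the stated $\tau^2 \lesssim \epsilon^{k/2}$ with $\epsilon = m\sqrt{\log(q m^{k/2})}$.

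The main obstacle is the near-orthogonality step: showing that a polynomial-size family of directions can be made $\epsilon$-near-orthogonal in the $Q$-inner product with $\epsilon$ essentially $m$ up to a $\sqrt{\log}$ factor. This is precisely where the covariance structure enters and where the two spectral assumptions are needed — to control $v^\top Q^2 v$ around $\norm{Q}_F^2$ and $\norm{Q^{1/2}v}^2$ around $\norm{Q^{1/2}}_F^2$ uniformly over $\mathrm{poly}(d)$ vectors via Hanson--Wright, so that the $\norm{Q}_H$-scale tail contributions of the quadratic forms do not swamp the Frobenius-scale fluctuations responsible for the factor $m=\norm{Q}_F/\norm{Q^{1/2}}_F^2$. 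The remaining bookkeeping, turning $q\beta+q/N$ into the stated closed form with the right constants, is routine but must be done carefully.
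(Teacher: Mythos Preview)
Your proposal is correct and follows essentially the same route as the paper: construct the hard class $\calF_k$ from normalized degree-$k$ Hermite polynomials in directions drawn i.i.d.\ Gaussian, use Hanson--Wright (together with the two spectral assumptions on $Q$) to bound the pairwise $Q$-correlations by $\epsilon$, invoke Corollary~\ref{cor:herm_cor}/Proposition~\ref{prop:herm_cor} to turn this into an $\epsilon^k$ bound on the $L^2$ correlations of the $f_v^*$, and conclude via the standard CSQ machinery. The only cosmetic difference is the last step: the paper invokes Lemma~\ref{lem:szorenyi} (the Sz\"or\'enyi--Damian black box) directly, whereas you reprove the same inequality via the Gram-matrix operator-norm counting argument; these are equivalent, and your explicit handling of the conditional variance $v'^\top Q^2 v'\approx\norm{Q}_F^2$ is in fact a detail the paper's Lemma~\ref{lem:csq} leaves implicit.
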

 \begin{proof}
     The main difficulty is constructing a large vector family with small correlations measured by the scalar product induced by $Q$. It is detailed in Section \ref{app:csq}. 
 \end{proof}
\begin{remark} The quantity $v$ can be interpreted as the typical value of $m_0$ at initialization. Since one could always use an oracle knowledge of $Q$ to reduce to the isotropic case (where $v=\sqrt{d}^{-1}$), our bound is only useful when $\frac{\norm{Q}_F}{\norm{Q^{1/2}}^2_F}\geq \frac{1}{\sqrt{d}}$. Note that the term $\frac{\norm{Q}_F}{\norm{Q^{1/2}}^2_F}$ corresponds to the average value of $m_0$ when $w^*\sim \calN(0, I_d)$. Hence, our bound is only meaningful for values of $w^*$ close to the average alignment with $Q$. 
\end{remark}
\begin{remark}
    By using the heuristic $\tau = \frac{1}{\sqrt{n}}$ we obtain $n=\Omega(\log d^{k/2} d \left(\frac{\norm{Q}_F}{\norm{Q^{1/2}}^2_F}\right)^{k/2})$. 
    Similarly to previous work \citep{damian22a}, there is a gap in the dependence in $k$ between the upper-bound provided by Theorem \ref{thm:main} and the lower bound. \cite{Smoothing23} show this gap can be removed in the isotropic case by using a smoothing technique. 
\end{remark}

\begin{remark} Theorem \ref{thm:csq} does not cover the cases where $\norm{Q}_F\ll \sqrt{\log d}$ or $\norm{Q^{1/2}}_F^2\gtrsim \norm{Q}_F \sqrt{\log d}$, i.e. the cases where the eigenvalues of $Q$ are quickly decreasing. In these settings, estimating the matrix $Q$ and incorporating it into the algorithm could lead to qualitatively better bounds.     
\end{remark}

\section{Proof Outline of Theorem \ref{thm:main}}
First, we analyze the population dynamic in Section \ref{sec:pop_dyn}. Then, in Section \ref{sec:emp_dyn}, we control the impact of the noise. 

\subsection{Analysis of the Population Dynamics}\label{sec:pop_dyn}

In this section, we assume direct access to the population gradient, meaning that the weights are updated by: \begin{equation}\label{eq:grad_update}
    w^{(t+1)}=w^{t}-\eta \expec_x \nabla_{w^{(t)}}L(y^{(t)}, f_{w^{(t)}}(x^{(t)})).
\end{equation} 
First, we will show that contrary to spherical SGD, the evolution dynamic of $m_t$ also depends on $w^{(t)}$ and $Q$, making the analysis more difficult. More specifically, Lemma \ref{lem:evol_mt} shows that \[  m_{t+1} \approx \frac{\norm{Q^{1/2}w^{(t)}}}{\norm{Q^{1/2}w^{(t+1)}}}m_t+ \eta \frac{ \norm{Q^{1/2}w^*}^2}{\norm{Q^{1/2}w^{(t+1)}}} c m_t^{k^*-1}. \]
In Section \ref{sec:weigth_growth}, we will show that  $\wsig^{(t)}:=\la w^{(t)}, w^*\ra w^*$, the projection of the weight onto the signal component, grows as $m_t$ whereas the growing rate of $\wpe^{(t)}:=w^{(t)}-\wsig^{(t)}$, the projection of the weight onto the component orthogonal to the signal, is slower. As a consequence, as long as $m_t\leq \gamma_1$ for some constant $0<\gamma_1<1$, $\norm{Q^{1/2}w^{(t+1)}}$ remains of the same order than $\norm{Q^{1/2}w^{(0)}}$. By further using the approximation \[ \frac{\norm{Q^{1/2}w^{(t)}}}{\norm{Q^{1/2}w^{(t+1)}}} \approx 1\] that will be formally justified in Section \ref{sec:emp_dyn} (control of $E_1$) we hence obtain the simplified dynamic  \begin{equation}\label{eq:mt_rec}
    m_{t+1}\approx m_t+\tilde{\eta} m_t^{k^*-1}
\end{equation} where $\tilde{\eta}= c\eta \frac{\norm{Q^{1/2}w^*}^2}{2\norm{Q^{1/2}w^{(0)}}}$.

From equation \eqref{eq:mt_rec}, one can show by using Proposition 5.1 in \cite{Arous2020OnlineSG} that, if $m_0>0$, one needs $\frac{k^*-2}{\tilde{\eta} m_0^{k^*-2}}$ iterations to obtain a correlation $m_t$ of constant order when $k^{*}\geq 3$ (the other cases can be treated separately). This result can also be derived heuristically by solving the associated differential equation $f'=\tilde{\eta} f^{k^*-1}$. The population analysis suggests that we should use a large learning rate $\tilde{\eta}$ to accelerate the convergence. However, we will see in Section \ref{sec:emp_dyn} that in order to control the noise, $\tilde{\eta}$ should be small enough.

\subsubsection{Evolution dynamic of $m_t$}

By projecting equation \eqref{eq:grad_update} onto $\frac{Qw^*}{\norm{Q^{1/2}w^*}}$ and dividing by $\norm{Q^{1/2}w^{(t+1)}}$ we obtain:  \footnotesize \begin{align}\label{eq:popdyn}
    m_{t+1}&= \frac{\norm{Q^{1/2}w^{(t)}}}{\norm{Q^{1/2}w^{(t+1)}}}m_t- \nonumber \\&\frac{\eta}{\norm{Q^{1/2}w^{(t+1)}}} \expec_x y \sigma'(\la \frac{w^{(t)}}{\norm{Q^{1/2}w^{(t)}}}, x \ra) \la x, \frac{Qw^*}{\norm{Q^{1/2}w^*}}  \ra .
\end{align} \normalsize
Unlike spherical SGD optimization, which constrains the norm of the weights at each iteration, the evolution of $m_t$ here depends crucially on the evolution of $\norm{Q^{1/2}w^{(t)}}$. For instance, if $\norm{Q^{1/2}w^{(t)}}$ becomes too large, then the information provided by the gradient may be lost. Furthermore, in contrast to the isotropic setting, it is not straightforward how to express the expectation of the gradient as a function of $m_t$. Lemma \ref{lem:evol_mt} addresses this point.%, relying on Assumption \ref{ass:coeff}

 \begin{lemma}\label{lem:evol_mt} Assume that Assumption \ref{ass:coeff} is satisfied. Then, there exists a constant $\gamma_1>0$ such that as long as the sequence $(m_t)_{t}$ is bounded from above by $\gamma_1$, it satisfies the following relation \begin{equation}\label{eq:evolmt}
 m_{t+1} \geq  \frac{\norm{Q^{1/2}w^{(t)}}}{\norm{Q^{1/2}w^{(t+1)}}}m_t+ \eta \frac{ \norm{Q^{1/2}w^*}^2}{\norm{Q^{1/2}w^{(t+1)}}} c m_t^{k^*-1}. \end{equation}
 \end{lemma}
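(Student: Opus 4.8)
The starting point is the exact recursion \eqref{eq:popdyn} for $m_{t+1}$, which splits into a ``norm-ratio'' term $\frac{\norm{Q^{1/2}w^{(t)}}}{\norm{Q^{1/2}w^{(t+1)}}}m_t$ and a ``gradient'' term. The first term is already in the desired form, so the entire task reduces to lower-bounding the gradient contribution
\[
    G_t := -\expec_x\, y\,\sigma'\!\left(\left\langle \tfrac{w^{(t)}}{\norm{Q^{1/2}w^{(t)}}}, x\right\rangle\right)\left\langle x, \tfrac{Qw^*}{\norm{Q^{1/2}w^*}}\right\rangle
\]
by $c\,\norm{Q^{1/2}w^*}^2\, m_t^{k^*-1}$. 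The plan is to reduce $G_t$ to a one-dimensional Gaussian computation via Hermite analysis, exactly as in the isotropic case but tracking the anisotropy through the $Q$-inner product.

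\textbf{Key steps.} First I would rewrite the two linear forms appearing inside the expectation in terms of a jointly Gaussian pair. Set $u := \tfrac{w^{(t)}}{\norm{Q^{1/2}w^{(t)}}}$ and $v^* := \tfrac{w^*}{\norm{Q^{1/2}w^*}}$, so that $\langle x, u\rangle$ and $\langle x, v^*\rangle$ are centered jointly Gaussian with $\Var(\langle x,u\rangle) = 1$ (by the normalization), $\Var(\langle x,v^*\rangle)=1$, and covariance $\expec\langle x,u\rangle\langle x,v^*\rangle = u^\top Q v^* = m_t$ — this is precisely why $m_t$ was defined through the $Q$-inner product. Also note $\langle x, Qw^*/\norm{Q^{1/2}w^*}\rangle = \norm{Q^{1/2}w^*}\langle x, Q^{1/2}(Q^{1/2}w^*)/\norm{Q^{1/2}w^*}^2\rangle$; more directly, $\langle x, Qv^*\rangle$ is Gaussian with $\expec \langle x,u\rangle\langle x, Qv^*\rangle = u^\top Q^2 v^*$, which is not simply $m_t$ — so the cleaner route is to use Stein's identity / Gaussian integration by parts to write $G_t = \norm{Q^{1/2}w^*}^2\,\expec_x\big[\partial_{v^*}\big(y\,\sigma'(\langle x,u\rangle)\big)\big]$ evaluated appropriately, or equivalently to expand $y = f^*(\langle x, v^*\rangle)$ and $\sigma'(\langle x,u\rangle)$ in Hermite polynomials in the two correlated variables. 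Concretely, I would use the identity $\expec[H_j(\langle x,u\rangle)H_k(\langle x,v^*\rangle)] = \delta_{jk}\, j!\, m_t^k$ together with the decompositions $xf^*(x) = \sum_{k\ge k^*-1} c_k H_k(x)/\sqrt{k!}$ and $\sigma'(x) = \sum_{k\ge 0} b_k H_k(x)/\sqrt{k!}$ from Assumption \ref{ass:coeff}; the cross term $\langle x, Qv^*\rangle$ when projected gives a factor that, after accounting for the $\norm{Q^{1/2}w^*}^2$ rescaling, collapses the double sum to $\norm{Q^{1/2}w^*}^2\sum_{k\ge k^*-1} b_k c_k\, m_t^{k}/(\text{normalization})$. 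This is the step where the specific bookkeeping of Gaussian moments against the matrix $Q$ has to be done carefully.

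\textbf{Applying the assumption and closing.} Once $G_t$ is expressed (up to the positive scalar $\norm{Q^{1/2}w^*}^2$) as $\sum_{k\ge k^*-1} b_k c_k\, m_t^k$ — i.e. as $m_t \cdot \big(\sum_{k\ge k^*-1} b_k c_k\, m_t^{k-1}\big)$, or after a change of variable as something of the form $m_t^{k^*-1}\cdot(\text{series in }m_t)$ — Assumption \ref{ass:coeff} applies with $x = m_t$: as long as $m_t \in [0,\gamma]$, the bracketed series is $\le -c\, m_t^{k^*-1}$ (the sign convention in the assumption is tuned so that $-G_t \le -c(\cdots)$, hence $G_t \ge c\,m_t^{k^*-1}$ after the sign flip coming from the loss $L(y,\hat y) = 1 - y\hat y$). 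Choosing $\gamma_1 := \gamma$ (or $\gamma_1 := \min(\gamma, \text{something})$ if the Hermite-series remainder needs an extra smallness margin to be absorbed) then yields $G_t \ge c\,\norm{Q^{1/2}w^*}^2\, m_t^{k^*-1}$ after reinstating the prefactor, and dividing by $\norm{Q^{1/2}w^{(t+1)}}$ and multiplying by $\eta$ gives exactly \eqref{eq:evolmt}.

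\textbf{Main obstacle.} The delicate point is not the sign manipulation but keeping the anisotropy straight: the three relevant linear functionals of $x$ — $\langle x,u\rangle$, $\langle x, v^*\rangle$, and $\langle x, Qv^*/\norm{Q^{1/2}w^*}\rangle$ — have a non-trivial $3\times 3$ covariance structure, and one must verify that the combination that actually appears (after projecting the gradient onto $Qw^*/\norm{Q^{1/2}w^*}$ and using Stein's lemma on $\sigma'$) produces the *same* series $\sum b_k c_k m_t^k$ that Assumption \ref{ass:coeff} is stated about, with the clean prefactor $\norm{Q^{1/2}w^*}^2$ and no leftover $Q$-dependent factors like $u^\top Q^2 v^*$. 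I expect this to go through because integrating by parts in the $v^*$ direction turns the extra $Q$ into exactly the normalization $\norm{Q^{1/2}w^*}^2$, but this is the computation that has to be executed with care, and it is also where the precise value of $\gamma_1$ (ensuring the higher-order Hermite terms do not overwhelm the leading $c\,m_t^{k^*-1}$) gets pinned down.
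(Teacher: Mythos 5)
Your setup is correct and your treatment of the ``aligned'' part of the gradient matches the paper: since $u^\top Q v^* = m_t$ by construction, the Hermite cross-correlation identity together with Assumption \ref{ass:coeff} does produce the series $\sum_{k\geq k^*-1}b_kc_k m_t^{k}\leq -cm_t^{k^*-1}$ for that part. But there is a genuine gap precisely at the point you yourself flag as the ``main obstacle,'' and the fix you propose does not work. The projection direction is $Qw^*/\norm{Q^{1/2}w^*}$, and $Qw^*$ is in general \emph{not} parallel to $w^*$, so the gradient term does not collapse to a single clean series with prefactor $\norm{Q^{1/2}w^*}^2$. Stein's lemma for $x\sim\calN(0,Q)$ gives $\expec[\la x, Qw^*\ra h(x)] = (w^*)^\top Q^2\,\expec[\nabla h(x)]$: you cannot integrate by parts ``only in the $v^*$ direction,'' and the full gradient leaves behind uncontrolled factors $(w^*)^\top Q^2 v^*$ and $(w^*)^\top Q^2 u$ (plus a distributional $\sigma''$, since $\sigma$ is ReLU and $\sigma'$ is the sign function). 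So the leftover $Q$-dependence does not magically cancel; it is a genuinely present second term.

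The paper's proof resolves this by decomposing $Qw^* = \lambda w^* + \lambda' w^*_\perp$ with $\lambda = \norm{Q^{1/2}w^*}^2$, splitting the gradient into $G_1+G_2$. The aligned piece $G_1$ is exactly what you computed, $G_1\leq -\lambda c m_t^{k^*-1}$. The orthogonal piece $G_2$ must be bounded separately: writing $z_t = m_t z^* + \sqrt{1-m_t^2}\,z_t^\perp$ and $z^*_\perp = \la x, w^*_\perp/\norm{Q^{1/2}w^*_\perp}\ra$, one evaluates $\expec f(z^*)\sigma'(m_tz^*+\sqrt{1-m_t^2}z_t^\perp)z^*_\perp$ in closed form via two Gaussian-integral lemmas (reducing it to $\expec f(z/\sqrt{2c_t+1})$ with $c_t = m_t^2/(1-m_t^2)$) and then uses the Hermite rescaling identity for $H_n(\gamma x)$ plus Cauchy--Schwarz to show this is $O(m_t^{k^*})$ --- one power of $m_t$ higher than $G_1$. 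The constant $\gamma_1$ in the lemma is chosen exactly so that this $G_2$ term is absorbed by $G_1$; it is not merely a margin for higher-order Hermite terms within a single series, as your plan suggests. Without an argument of this kind for the orthogonal component, the claimed inequality is not established.
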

 \begin{proof}
     By writing $Qw^*= \lambda w^*+ \lambda' w^*_\perp$ where $w^*_\perp$ is a unit vector orthogonal to $w^*$, $\lambda = \la Qw^*,w^*\ra= \norm{Q^{1/2}w^*}^2$ and $\lambda'= \sqrt{\norm{Qw^*}^2-\norm{Q^{1/2}w^*}^4}$, we can decompose the population gradient as \small \[ \expec_x y \sigma'\left( \left\la \frac{w^{(t)}}{\norm{Q^{1/2}w^{(t)}}}, x \right\ra \right) \left\la x, \frac{Qw^*}{\norm{Q^{1/2}w^*}}  \right\ra =G_1 + G_2 \]\normalsize where \small \begin{align*}
    G_1 &= \lambda \expec_x y \sigma'\left(\left\la \frac{w^{(t)}}{\norm{Q^{1/2}w^{(t)}}}, x \right\ra\right) \left\la x, \frac{w^*}{\norm{Q^{1/2}w^*}}  \right\ra\\
    G_2 &=  \lambda' \expec_x y \sigma'\left(\left\la \frac{w^{(t)}}{\norm{Q^{1/2}w^{(t)}}}, x \right\ra\right) \left\la x, \frac{w^*_\perp}{\norm{Q^{1/2}w^*}}\right\ra.
\end{align*}\normalsize

\paragraph{Control of $G_1$.}
To simplify the notations, let us write $z^*=\la x, \frac{w^*}{\norm{Q^{1/2}w^*}}\ra$ and $z_t= \la x,  \frac{w^{(t)}}{\norm{Q^{1/2}w^{(t)}}} \ra $. Notice that $z^*, z_t \sim \calN(0,1)$. Recall that $y=f(z^*)$. So we need to evaluate $\expec_{z^*,z_t}z^*f(z^*)\sigma'(z_t)$. By using the Hermite decomposition of these functions and Proposition \ref{prop:herm_cor}, we can see that this expectation depends essentially on the information exponent of $x\to xf(x)$ and the correlation between $z^*$ and $z_t$. But by Proposition \ref{prop:herm_cor} with $n=1$ (see Section \ref{app:hermite} in the appendix) we have $\expec z^*z_t = m_t$.

By using Hermite decomposition, Proposition \ref{prop:herm_cor}, Lemma \ref{app:lem_ie} %that shows that the information exponent of $xf(x)$ equals $k^*-1$
, and  Assumption \ref{ass:coeff}, we obtain \begin{equation}\label{eq:g1}
    G_1 = \lambda \sum_{l\geq k^*-1} c_lb_lm_t^l \leq -\lambda c m_t^{k^*-1}.
\end{equation}  

\paragraph{Control of $G_2$.}  Denote $z_\perp^*= \la x, \frac{w^*_\perp}{\norm{Q^{1/2}w^*_\perp}}\ra$ and notice that $\expec z_t z_\perp^*=q_t$ where \[q_t= \la \frac{w^{(t)}}{\norm{Q^{1/2}w^{(t)}}}, \frac{w^*_\perp}{\norm{Q^{1/2}w^*_\perp}} \ra_Q.\] We can write $z_t=m_t z^*+\sqrt{1-m_t^2}z_t^\perp $, where $z_t^\perp \sim \calN(0,1)$ is independent from $z^*$. Similarly, we can decompose $z^*_\perp= q_t z_t^\perp + \sqrt{1-q_t^2}\xi $ where $\xi\sim \calN(0,1)$ is independent from $z^*$ and $z_t^\perp$.

Let $p_t= \frac{m_t^2}{2(1-m_t^2)}$. By combining Lemma \ref{lem:gauss_int1} and \ref{lem:gauss_int2} (see Section \ref{app:tech}), we obtain 
%\gb{perhaps add a lemma in appendix to detail the calculation ?}
\begin{align*}
    \expec &f(z^*)\sigma'(m_t z^*+\sqrt{1-m_t^2}z_t^\perp)z^*_\perp =\\& \frac{q_t}{\sqrt{2\pi(2p_t+1)} }\expec_{z\sim \calN(0,1)} f(\frac{z}{\sqrt{2p_t+1}}).
\end{align*}

We can evaluate $\expec f(\frac{z}{\sqrt{2p_t+1}})$ by applying the multiplicative property of Hermite polynomials recalled in Lemma \ref{lem:mult_herm} in the appendix with $\gamma=\sqrt{2p_t+1}^{-1}$.
Notice that for odd $n$, $\la H_n(\gamma x), H_0(x) \ra =0$ and for $n=2m$, we have \[ \la H_n(\gamma x), H_0(x) \ra = (\gamma^2-1)^{m}\frac{(2m)!}{m!}2^{-m} \]
Since by definition $f(x)=\sum_{k\geq k^*}\frac{a_k}{\sqrt{k!}}H_k(x)$ we get \begin{align*}
    &\abs{ \la f(\frac{x}{\sqrt{2c_t+1}}), H_0(x)\ra } \\&= \abs{\sum_{k\geq k^*/2}\frac{a_{2k}}{\sqrt{(2k)!}}(\frac{1}{2p_t+1}-1)^{k}\frac{(2k)!}{k!}2^{-k}} \\
    & \lesssim \sqrt{\abs{\sum a_{2k}^2\frac{(2k)!}{4^k(k!)^2}}}\sqrt{\abs{\sum p_t^{2k}}}  \tag{by Cauchy-Schwartz}\\
    &\lesssim p_t^{k^*/2}\lesssim m_t^{k^*}.
\end{align*}  
Here we used the fact that by Stirling formula $\frac{(2k)!}{4^k(k!)^2}\sim 1$ so the sequence is bounded, and the fact that by definition of $f$, $\sum a_k^2 =O(1)$. This shows that $G_2$ is of order at most $\lambda ' m_t^{k^*}$ and is negligible compared to $G_1$ as long as $m_t$ is small enough, since $\lambda'\leq 1$.
 \end{proof}

The lower bound obtained in Lemma \ref{lem:evol_mt} is only useful when $m_t>0$. This is ensured by Assumption \ref{ass:init}.

In the next two sections, we are going to control the growth of $\norm{w^{(t)}}$. We are going to show that as long as $m_t\leq \gamma_1$ for some constant $\gamma_1>0$, the weights remain bounded and do not evolve quickly, i.e. $\norm{Q^{1/2}w^{(t)}}\approx \norm{Q^{1/2}w^{(0)}}$  so that equation \eqref{eq:evolmt} is equivalent to \begin{equation}
    \label{eqevolmt2}
     m_{t+1} \geq  m_t+ \tilde{\eta} m_t^{k^*-1}
\end{equation}
where $\tilde{\eta}= c\eta\norm{Q^{1/2}w^*}^2/2\norm{Q^{1/2}w^{(0)}}$. This last relation is similar to the one derived in the isotropic case by \cite{Arous2020OnlineSG}.

\subsection{Control of the growth of $\norm{Q^{1/2}w^{(t)}}$}\label{sec:weigth_growth}

In this section, we justify the approximation $\norm{Q^{1/2}w^{(t)}}\approx \norm{Q^{1/2}w^{(0)}}$ for $t\leq T$.
 By recursion, we obtain \begin{equation}\label{eq:weight_growth}
     Q^{1/2}w^{(t)} = Q^{1/2}w^{(0)}+\eta \sum_{l\leq t-1} \expec y\sigma'(\la w^{(l)}, x \ra ) Q^{1/2}x.
 \end{equation}  Let $w^{(l)}_\perp$ be the projection of $w^{(l)}$ onto the space orthogonal to $w^*$. There are only two directions, $Q^{3/2}w^*$ and $Q^{3/2}w^{(l)}_\perp$, in which the projections of the vector $\expec y\sigma'(\la w^{(l)}, x \ra ) Q^{1/2}x$ are non zero. Indeed, if $v$ is orthogonal to $Q^{3/2}w^*$ then \[ \expec_x \la Q^{1/2}x, v \ra \la x, w^* \ra = \la Q^{1/2}v, w^* \ra_Q = 0\] and similarly for $w^{(l)}_\perp$. As a consequence $\la Q^{1/2}x, v \ra$ is independent of $z$ and $z_t$, and the resulting expectation is zero. To evaluate $\norm{Q^{1/2}w^{(t)}}$ it is sufficient to evaluate the projection of the expectation in the directions identified previously. By a similar analysis as in Lemma \ref{lem:evol_mt} we obtain \[ \expec f(z^*)\sigma'(z_t) \la x, \frac{Q^2w^*}{\norm{Q^{3/2}w^*}}\ra \approx \lambda^2 cm_t^{k^*-1}. \] We can also show that \[ \expec f(z^*)\sigma'(z_t) \la x, \frac{Q^2w^{(l)}_\perp}{\norm{Q^{3/2}w^{(l)}_\perp}}\ra \leq C m_t^{k^*}. \] The details of the calculations can be found in Section \ref{app:weight_growth}. This shows that as long as $\lambda^2c\eta \sum_{t\leq T}m_t^{k^*-1}$ and $C\eta \sum_{t\leq T}m_t^{k^*}$ remains smaller to $0.5\norm{Q^{1/2}w^{(0)}}$, we have \[ 0.5\norm{Q^{1/2}w^{(0)}}\leq  \norm{Q^{1/2}w^{(t)}} \leq 1.5\norm{Q^{1/2}w^{(0)}}. \] The previous conditions are satisfied by choice of the initialization scale, $\eta$ and $T$: the contribution in the direction $Q^{3/2}w^*$ grows slower than $m_{t+1}$, and similarly for the contribution in the other direction.

\subsection{Analysis of the noisy dynamic}\label{sec:emp_dyn}

In this section, we will describe how the noise can be controlled so that after $T$ iterations, $m_t$ is well predicted by the population dynamic analysis performed in Section \ref{sec:pop_dyn}.

%\subsubsection{Noisy dynamic evolution}
We decompose the gradient into two components: the population version and the stochastic noise $V_t$ \[ \nabla_{w^{(t)}}L = \expec(\nabla_{w^{(t)}}L)+V_t. \] 

\subsubsection{Control of $\norm{Q^{1/2}w^{(t)}}$}
As shown in the analysis of the population dynamic, it is critical to control $\norm{Q^{1/2}w^{(t)}}$. In the noisy setting, we obtain the following counterpart of equation \eqref{eq:weight_growth} \begin{align}\label{eq:noisy_weight_growth}
   Q^{1/2}w^{(t)}&= Q^{1/2}w^{(0)}+ \eta \sum_{l\leq t-1} \expec y\sigma'(\la w^{(l)}, x \ra ) Q^{1/2}x \notag\\
   &+\eta Q^{1/2}\sum_{t\leq T}V_t. 
\end{align}
By using Doob's maximal inequality (see Lemma \ref{app:lem_mart2} in appendix), we can show that $\eta \norm{\sum_{t\leq T}V_t}=o(1) $. Hence, the result follows the population dynamic analysis.

\subsubsection{Evolution of $m_t$}

Instead of controlling directly $m_t$, it is more convenient to study the dynamic of the related quantity $\tilde{m}_t:=\la w^{(t)}, \frac{w^*}{\norm{Q^{1/2}w^*}}\ra_Q$ that avoids dividing by the random quantity $\norm{Q^{1/2}w^{t}}$. Since for $t\leq T$ we have $0.5\norm{Q^{1/2}w^{0}}\leq \norm{Q^{1/2}w^{t}}\leq  1.5\norm{Q^{1/2}w^{0}}$, one can easily relate $\tilde{m}_t$ to $m_t$. By definition, we have \begin{align*}
    \tilde{m}_{t+1}&= \tilde{m}_{t}-\eta \expec y\sigma'(\la w^{(t)}, w^* \ra )\la x, \frac{Qw^*}{\norm{Q^{1/2}w^*}}\ra \\&+ \eta \la V_t,  \frac{Qw^*}{\norm{Q^{1/2}w^*}}\ra.
\end{align*}
The expectation term corresponds to $G_1$ analyzed in Lemma \ref{lem:evol_mt} and the stochastic term forms a martingale that Doob's Lemma can control, see Lemma \ref{app:lem_mart1} in the appendix.
Hence, we have obtained a recursion of the form \[ \tilde{m}_{T+1} \geq \eta' \sum_{t\leq T}\tilde{m}_t^{k^*-1} +\eta H_{T} \] where $H_T=\sum_{t\leq T}\la V_t,  \frac{Qw^*}{\norm{Q^{1/2}w^*}}\ra$ and $\eta'=c' \norm{Q^{1/2}w^{(0)}}^{k^*-1}\lambda \eta $. We used the fact that $m_t\geq \frac{2}{3}\norm{Q^{1/2}w^{(0)}} \tilde{m}_t$ for $t\leq T$.

To conclude the proof of Theorem \ref{thm:main}, it remains to understand how many iterations $T$ are necessary so that $m_t$ becomes of constant order. Sequence satisfying $c_{t+1}\geq c_t+\eta m_t^{l}$ have been analyzed formally in \cite{Arous2020OnlineSG} based on Bihari–LaSalle inequality. Here, we present a heuristic way to recover the result.  The continuous analogous of the relation $c_{t+1}\geq c_t+\eta m_t^{l}$ is $f'(t)=\eta f^{l}(t)$. By integrating between $0$ and $T$, we obtain $\frac{1}{f^{l-1}(0)}-\frac{1}{f^{l-1}(T)}=\eta T$ for $l\geq 2$. Since $f(T)$ should be of constant order, it is negligible compared to $\frac{1}{f^{l-1}(0)}=m_0^{-l+1}$. Given the choice $\eta =  \frac{\epsilon}{\sqrt{T}\norm{Q^{1/2}}_F}$ (necessary to control the stochastic error), solving the equation leads to $T=\norm{Q^{1/2}}^2_F\epsilon^{-2}m_0^{-2(l-1)} $.

%\subsubsection{Strong recovery: the descent phase}

%See Section \ref{app:descent_phase} in the appendix for a discussion concerning extending our guarantees to the strong recovery setting. 
%\input{analysis}
\section{Numerical Experiments}\label{sec:xp}
In this section, we illustrate our theoretical results through numerical simulations \footnote{The code is available at \url{https://glmbraun.github.io/AniSIM}}. The implementation details are provided in Section \ref{app:xp}.

\paragraph{Anisotropy can help. } We consider the following setting: $y= H_2(\la x, \frac{w^*}{\norm{Q^{1/2}w^*}}\ra)$ where $H_2(x)= x^2-1$, $w^*\in \mS^{d-1}$, and $x\sim \calN(0,Q)$ with a covariance matrix of the form $Q= \frac{I_d+\kappa w^* (w^*)^\top}{1+\kappa}$, parametrized by $\kappa>0$.
The information exponent of $H_2$ is $2$. We set the dimension $d=1000$, the sample size $T=40000$, and the learning rate $0.00002$. The learning dynamics when $\kappa=0$ (isotropic case) is plotted in Figure \ref{fig:xp1subfig1}, while those for $\kappa=6$ in Figure \ref{fig:xp1subfig2}. The improved alignment at initialization when $\kappa=6$ significantly accelerates learning.

\begin{figure}[!ht]
\centering
\begin{subfigure}{0.49\textwidth}
    \includegraphics[width=\textwidth]{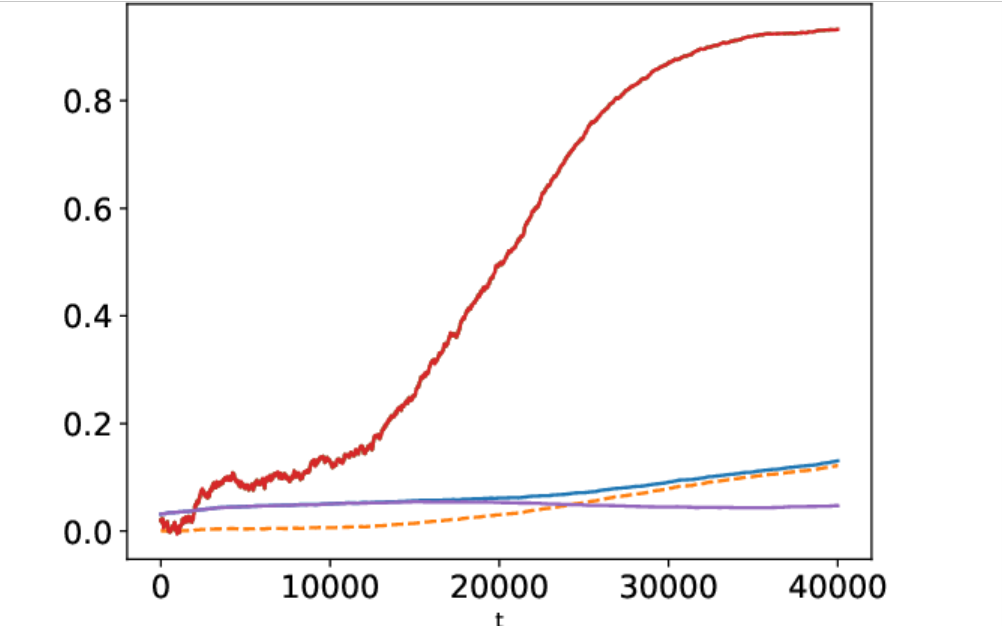}
    \caption{Isotropic setting $\kappa=0$}
    \label{fig:xp1subfig1}
\end{subfigure}
\hfill
\begin{subfigure}{0.49\textwidth}
    \includegraphics[width=\textwidth]{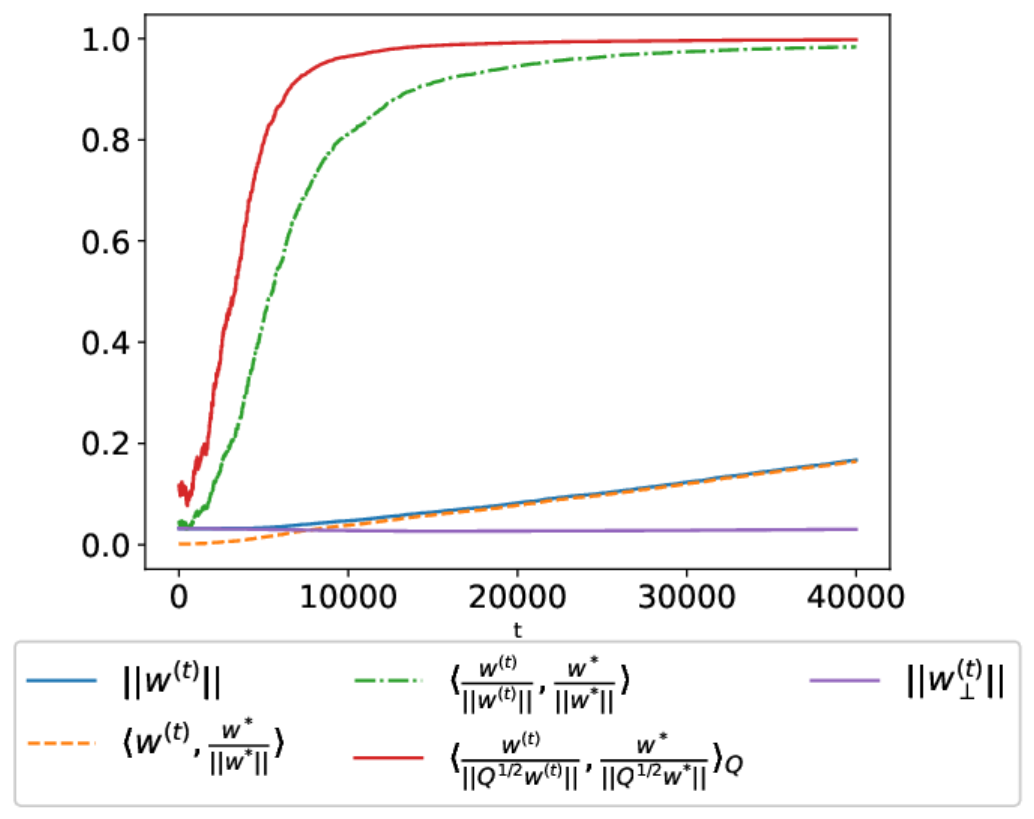}
    \caption{Anisotropic setting $\kappa = 6$}
    \label{fig:xp1subfig2}
\end{subfigure}

\caption{ Comparison of learning dynamics in isotropic and anisotropic settings.
}
\label{fig:xp1}
\end{figure}

\paragraph{Comparison between SGD and Spherical SGD.} We compare the performance of vanilla SGD (\texttt{SGD}) with spherical SGD (\texttt{SpheSGD}). We used an oracle knowledge of the covariance matrix to implement the algorithm. Figure \ref{fig:xp2} shows that the two \texttt{SGD} algorithms behave similarly.

\begin{figure}
    \centering
    \includegraphics[width=1\linewidth]{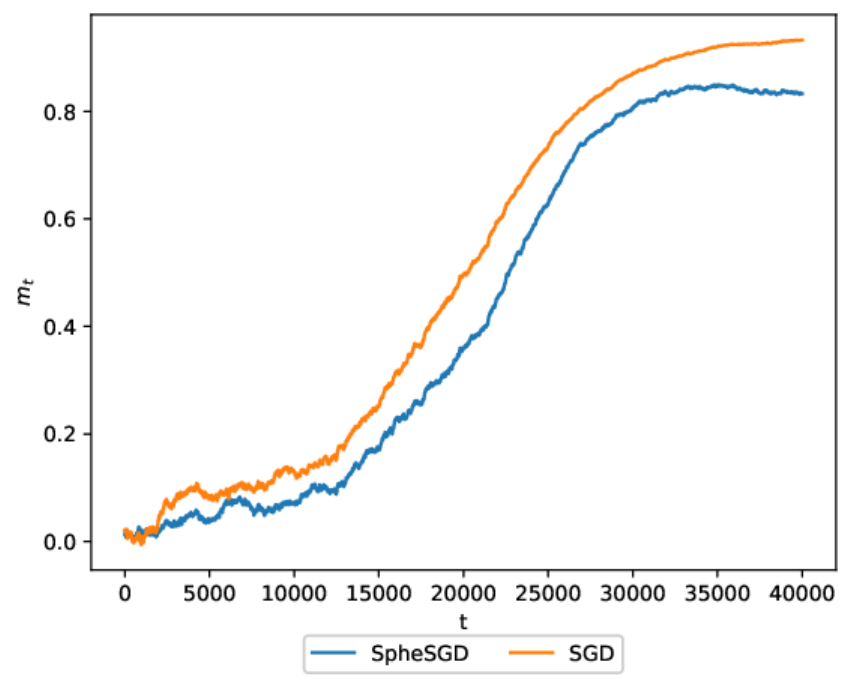}
    \caption{Comparison of learning dynamics between vanilla SGD and spherical SGD.}
    \label{fig:xp2}
\end{figure}

\paragraph{Adaptive Learning Rate.} As shown in Section \ref{app:xp}, progressively increasing the learning rate is beneficial to accelerate the learning dynamic. This is consistent with the theoretical insight that the learning rate should be small enough to control noise at each iteration; however, as the signal increases during training, higher noise can be tolerated.

\paragraph{Batch Reuse.} We demonstrate in Appendix \ref{app:xp} that reusing the same batch can significantly reduce the required sample complexity.

%\paragraph{Edge of Stability.} In Appendix \ref{app:xp}, we empirically show that using a large learning rate, while leading to an unstable stochastic phase, can result in faster alignment with the signal.

\section{Conclusion}
We analyzed the problem of learning a SIM from anisotropic Gaussian inputs using vanilla SGD. Unlike previous approaches relying on spherical SGD, which require prior knowledge of the covariance structure, our analysis shows that vanilla SGD can naturally adapt to the anisotropic geometry without estimating the covariance matrix. Our theoretical contributions include an upper bound on the sample complexity and a CSQ lower bound, depending on the covariance matrix structure instead of the input data dimension.
%emphasizing the critical influence of covariance geometry on the learning process. 
Numerical simulations validated these theoretical findings and demonstrated the practical effectiveness of vanilla SGD.
%under a variety of settings. 

This work opens up several avenues for future research. First, our analysis has focused on the training dynamics of a single neuron, but extending these insights to deeper or wider neural networks would be valuable for a broader understanding of how vanilla SGD performs in more complex architectures. Additionally, achieving the generative exponent in sample complexity by reusing data remains an open question.
%Finally, we believe that our result could be extended to the multi-index model setting. 

\newpage
\bibliography{references}

\section*{Checklist}
 \begin{enumerate}

 \item For all models and algorithms presented, check if you include:
 \begin{enumerate}
   \item A clear description of the mathematical setting, assumptions, algorithm, and/or model. [Yes]
   \item An analysis of the properties and complexity (time, space, sample size) of any algorithm. [Yes]
   \item (Optional) Anonymized source code, with specification of all dependencies, including external libraries. [Yes]
 \end{enumerate}

 \item For any theoretical claim, check if you include:
 \begin{enumerate}
   \item Statements of the full set of assumptions of all theoretical results. [Yes]
   \item Complete proofs of all theoretical results. [Yes]
   \item Clear explanations of any assumptions. [Yes]     
 \end{enumerate}

 \item For all figures and tables that present empirical results, check if you include:
 \begin{enumerate}
   \item The code, data, and instructions needed to reproduce the main experimental results (either in the supplemental material or as a URL). [Yes]
   \item All the training details (e.g., data splits, hyperparameters, how they were chosen). [Yes]
         \item A clear definition of the specific measure or statistics and error bars (e.g., with respect to the random seed after running experiments multiple times). [Yes]
         \item A description of the computing infrastructure used. (e.g., type of GPUs, internal cluster, or cloud provider). [Yes]
 \end{enumerate}

 \item If you are using existing assets (e.g., code, data, models) or curating/releasing new assets, check if you include:
 \begin{enumerate}
   \item Citations of the creator If your work uses existing assets. [Not Applicable]
   \item The license information of the assets, if applicable. [Not Applicable]
   \item New assets either in the supplemental material or as a URL, if applicable. [Not Applicable]
   \item Information about consent from data providers/curators. [Not Applicable]
   \item Discussion of sensible content if applicable, e.g., personally identifiable information or offensive content. [Not Applicable]
 \end{enumerate}

 \item If you used crowdsourcing or conducted research with human subjects, check if you include:
 \begin{enumerate}
   \item The full text of instructions given to participants and screenshots. [Not Applicable]
   \item Descriptions of potential participant risks, with links to Institutional Review Board (IRB) approvals if applicable. [Not Applicable]
   \item The estimated hourly wage paid to participants and the total amount spent on participant compensation. [Not Applicable]
 \end{enumerate}

 \end{enumerate}

\onecolumn
\appendix
\begin{center}
\Large \textbf{Supplementary Material}
\end{center}

We provide background on Hermite polynomials, including key properties that are crucial to our analysis. Technical lemmas involving the evaluation of Gaussian integrals are collected in Section \ref{app:tech}. In Section \ref{app:csq}, we introduce the CSQ framework and prove Theorem \ref{thm:csq}. In Section \ref{app:proof_main}, we complete the proof of Theorem \ref{thm:main}. Finally, in Section \ref{app:xp}, we present additional numerical experiments along with implementation details.

\section{Hermite polynomials}\label{app:hermite}

Consider the probability space $(\R, \calB(\R), \gamma )$ where $\gamma$ denotes the standard Gaussian measure on $\R$ and let $\calH= L^2(\R, \gamma)$ the associated Hilbert space of squared integrable function with respect to $\gamma$.

We will define Hermite polynomial following the approach of \cite{Nualart_Nualart_2018}. Toward this end, let us define two differential operators. For any $f\in C^1(\R)$ we define the \emph{derivative operator} $Df(x)= f'(x)$ and the \emph{divergence operator } $\delta f(x)=xf(x)-f'(x)$. The following lemma show that the operators $D$ and $\delta$ are adjoint.

\begin{lemma}\label{lem:dual}
    Denote by $C_p^1(\R)$ the space of continuously differentiable functions that grows at most polynomially, i.e., there exists some integer $N \geq 1$ and constant $C >0$ such that $|f'(x)|\leq C(1+\abs{x}^N)$. For any $f,g\in C^1_p(\R)$, we have \[ \langle Df, g\rangle_\calH = \langle f, \delta g\rangle_\calH. \]
\end{lemma}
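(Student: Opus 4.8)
The plan is to prove the identity by Gaussian integration by parts. Write $\phi(x) = (2\pi)^{-1/2} e^{-x^2/2}$ for the standard Gaussian density, so that $\langle u, v\rangle_\calH = \int_\R u(x)v(x)\phi(x)\,dx$; the only elementary fact I need is $\phi'(x) = -x\phi(x)$. First I would expand $\langle Df, g\rangle_\calH = \int_\R f'(x) g(x)\phi(x)\,dx$ and integrate by parts on a finite interval $[-R,R]$, producing a boundary term $[f(x)g(x)\phi(x)]_{-R}^{R}$ and the bulk term $-\int_{-R}^{R} f(x)\left(g(x)\phi(x)\right)'\,dx$. Applying the product rule and substituting $\phi' = -x\phi$, the bulk term becomes $\int_{-R}^{R} f(x)\left(x g(x) - g'(x)\right)\phi(x)\,dx = \int_{-R}^{R} f(x)\,\delta g(x)\,\phi(x)\,dx$.

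Next I would let $R \to \infty$, which requires two checks. First, a short preliminary observation: the hypothesis $|f'(x)| \le C(1+|x|^N)$ combined with $f(x) = f(0) + \int_0^x f'(t)\,dt$ shows that $f$ itself grows at most polynomially, and the same holds for $g$. Consequently (i) the boundary term $f(R)g(R)\phi(R)$ and $f(-R)g(-R)\phi(-R)$ tend to $0$, since $\phi$ decays faster than any polynomial; and (ii) the three integrands $f'g$, $xfg$, and $fg'$ are each dominated by a fixed polynomial times $\phi$, hence absolutely integrable, so the limits of the truncated integrals equal $\int_\R f'g\,\phi$ and $\int_\R f\,\delta g\,\phi$ respectively. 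Passing to the limit then gives $\langle Df, g\rangle_\calH = \langle f, \delta g\rangle_\calH$.

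This is the classical duality between the derivative and divergence operators on Gaussian space, so I do not expect any real obstacle; the only point needing care is the bookkeeping of the polynomial-growth bounds so that the boundary term vanishes and every integrand is genuinely integrable. I would isolate the auxiliary claim that $C^1_p$-growth of $f'$ implies polynomial growth of $f$, since it is used both for the boundary term and for the integrability of $xfg$, and otherwise the argument is the routine integration-by-parts computation above.
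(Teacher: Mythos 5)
Your proof is correct and follows exactly the route the paper takes: the paper's proof is the one-line remark that the identity follows by integration by parts, and your argument is that computation carried out in full, with the (routine but necessary) checks that the boundary term vanishes and all integrands are dominated by a polynomial times the Gaussian density. No discrepancies.
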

\begin{proof}
    The result is derived directly by using integration by parts.
\end{proof}

The Hermite polynomials are defined as follows: \begin{align*}
    H_0(x)&= 1,\\
    H_n(x)&= \delta^n 1.
\end{align*}

\begin{proposition} The sequence of normalized Hermite polyniomials $(\frac{1}{\sqrt{n!}}H_n)_{n\geq 0}$ is an orthonormal basis of $\calH$.
\end{proposition}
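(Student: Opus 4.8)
The plan is to split the claim into two parts: orthonormality, and completeness (totality) in $\calH = L^2(\R,\gamma)$. For orthonormality, I would argue by induction on the degree using the adjointness of $D$ and $\delta$ established in Lemma \ref{lem:dual}. Concretely, since $H_n = \delta^n 1$, for $m \le n$ one has
\[
\la H_m, H_n \ra_\calH = \la H_m, \delta^n 1\ra_\calH = \la D^n H_m, 1\ra_\calH,
\]
by applying Lemma \ref{lem:dual} $n$ times (all functions involved are polynomials, hence in $C^1_p(\R)$, so the lemma applies at each step). Then I would record the elementary facts that $\delta$ raises degree by exactly one with leading coefficient $1$, so $\deg H_n = n$ and $H_n$ is monic; consequently $D^n H_m = 0$ when $m < n$, giving orthogonality, and $D^n H_n = n!$, giving $\la H_n, H_n\ra_\calH = n! \la 1,1\ra_\calH = n!$. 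This yields $\la \tfrac{1}{\sqrt{m!}}H_m, \tfrac{1}{\sqrt{n!}}H_n\ra_\calH = \delta_{mn}$. A clean alternative is to verify the three-term/ladder relations $D H_n = n H_{n-1}$ and $\delta H_n = H_{n+1}$ first, then get orthonormality as an immediate corollary; I would likely present it this way since those relations are reused elsewhere.

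For completeness, the standard route is to show that the only $f \in \calH$ orthogonal to every $H_n$ (equivalently, to every polynomial, since $\{H_0,\dots,H_n\}$ spans the polynomials of degree $\le n$) is $f = 0$ $\gamma$-a.e. Suppose $\la f, x^n\ra_\calH = 0$ for all $n \ge 0$. Consider the function
\[
g(z) := \int_\R f(x)\, e^{zx}\, \tfrac{1}{\sqrt{2\pi}} e^{-x^2/2}\, dx,
\]
which is well-defined and analytic on a strip (in fact on all of $\C$) because $f \in L^2(\gamma)$ and the Gaussian weight dominates $e^{\Re(z)x - x^2/2}$; differentiating under the integral sign and using $\la f, x^n\ra_\calH = 0$ shows all derivatives of $g$ at $0$ vanish, so $g \equiv 0$. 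Evaluating at $z = it$ gives that the Fourier transform of $f(x)e^{-x^2/2}$ vanishes identically, hence $f(x)e^{-x^2/2} = 0$ a.e., i.e. $f = 0$ in $\calH$. Thus the closed span of the Hermite polynomials is all of $\calH$, and together with orthonormality this proves they form an orthonormal basis.

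The main obstacle is the completeness half: orthonormality is a routine induction once the ladder relations for $D$ and $\delta$ are in hand, but density of polynomials in $L^2(\gamma)$ genuinely requires a complex-analytic (or moment-generating-function) argument — one must justify the analyticity of $g$ and the interchange of differentiation and integration, which is where the subexponential growth of the Gaussian weight is doing real work. I would make sure to state explicitly the integrability bound ($|f(x)e^{zx}|e^{-x^2/2}$ is dominated, uniformly for $z$ in compact sets, by an $L^1(dx)$ function via Cauchy--Schwarz against $f \in L^2(\gamma)$) so the analyticity and the termwise differentiation are both justified. Everything else — monicity of $H_n$, the degree count, the value $\la 1,1\ra_\calH = 1$ — is bookkeeping.
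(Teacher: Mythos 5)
Your proof is correct. Note that the paper itself gives no proof of this proposition --- it is stated as a classical fact, following the Nualart--Nualart development of Hermite polynomials via the operators $D$ and $\delta$ --- so there is nothing to compare against line by line. That said, your orthonormality argument is exactly the one the paper's setup is designed to support: repeated application of the adjointness relation $\la Df,g\ra_\calH=\la f,\delta g\ra_\calH$ from Lemma \ref{lem:dual}, plus the observation that $\delta$ raises degree by one monically, gives $\la H_m,\delta^n 1\ra_\calH=\la D^nH_m,1\ra_\calH$, which vanishes for $m<n$ and equals $n!$ for $m=n$; the ladder relations $DH_n=nH_{n-1}$ and $\delta H_n=H_{n+1}$ that you propose to isolate first are in fact used elsewhere in the paper (in the proof of Lemma \ref{app:lem_ie}), so presenting them up front is the natural choice. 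The completeness half is the genuinely nontrivial part that the paper leaves entirely implicit, and your moment-generating-function argument handles it correctly: the Cauchy--Schwarz bound $\int|f(x)|e^{\Re(z)x}\,d\gamma(x)\le\norm{f}_\calH\,e^{(\Re z)^2}$ justifies that $g$ is entire and that differentiation under the integral is legitimate, the vanishing of all Taylor coefficients at $0$ forces $g\equiv 0$ on $\C$, and restricting to the imaginary axis reduces to injectivity of the Fourier transform on $L^1$. This is the standard proof; no gaps.
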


Another particularly useful property of the Hermite polynomial that we will rely on heavily is the simple characterization of the correlation between two Hermite polynomials with correlated Gaussian inputs.
\begin{proposition}\label{prop:herm_cor}
    Let $x\sim \calN(0, I_d)$. For any $w, w'\in \mS^{d-1}(\R)$ we have \[ \expec_x H_n\left(\langle x, w\rangle\right)H_{n'}\left(\langle x, w'\rangle\right)= \indic_{\lbrace n=n' \rbrace }n!\langle w, w'\rangle^n.\]
\end{proposition}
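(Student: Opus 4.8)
The final statement to prove is Proposition~\ref{prop:herm_cor}: for $x\sim\calN(0,I_d)$ and unit vectors $w,w'$, one has $\expec_x H_n(\langle x,w\rangle)H_{n'}(\langle x,w'\rangle)=\indic_{\{n=n'\}}n!\langle w,w'\rangle^n$.

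\textbf{Approach.} The plan is to reduce the $d$-dimensional computation to a two-dimensional (in fact essentially one-dimensional) Gaussian calculation, then exploit the adjointness of $D$ and $\delta$ established in Lemma~\ref{lem:dual} together with the definition $H_n=\delta^n 1$. First I would observe that $(\langle x,w\rangle,\langle x,w'\rangle)$ is a centered jointly Gaussian pair with unit variances and covariance $\rho:=\langle w,w'\rangle$, so the expectation only depends on $n,n',\rho$; write it as $I(n,n')=\expec[H_n(U)H_{n'}(V)]$ where $(U,V)$ is this Gaussian pair. Then I would condition on $U$: write $V=\rho U+\sqrt{1-\rho^2}\,Z$ with $Z\sim\calN(0,1)$ independent of $U$ (handling the degenerate case $\rho=\pm 1$ separately, where it is immediate from orthonormality of $(H_n/\sqrt{n!})$).

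\textbf{Key steps.} The heart of the argument is the identity $\expec_{Z}[H_{n'}(\rho u+\sqrt{1-\rho^2}Z)]=\rho^{n'}H_{n'}(u)$, i.e.\ that the Ornstein--Uhlenbeck / Mehler semigroup acts on $H_{n'}$ by the scalar $\rho^{n'}$. I would prove this by induction on $n'$ using $H_{n'}=\delta H_{n'-1}$: since $\delta g(x)=xg(x)-g'(x)$, a short Gaussian integration-by-parts in the variable $Z$ (or equivalently in $u$) shows $\expec_Z[\delta(H_{n'-1})(\rho u+\sqrt{1-\rho^2}Z)] = \delta\big(\expec_Z[H_{n'-1}(\rho u+\sqrt{1-\rho^2}\,\cdot\,)]\big)(u)\cdot\rho$ up to bookkeeping; more cleanly, differentiate under the expectation and use that $\delta$ intertwines with the semigroup. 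Alternatively, and perhaps most economically, I would appeal directly to Lemma~\ref{lem:dual}: write $I(n,n')=\langle H_n, T_\rho H_{n'}\rangle_{\calH}$ where $T_\rho$ is the conditional-expectation operator $U\mapsto\expec[\,\cdot\,(V)\mid U]$, use $H_n=\delta^n 1$ and move all $n$ copies of $\delta$ across to $D^n$ acting on $T_\rho H_{n'}$, obtaining $I(n,n')=\langle 1, D^n T_\rho H_{n'}\rangle_{\calH}=\expec[(T_\rho H_{n'})^{(n)}(U)]$. Since $T_\rho H_{n'}=\rho^{n'}H_{n'}$ and $D^n H_{n'}=\frac{n'!}{(n'-n)!}H_{n'-n}$ for $n\le n'$ (zero otherwise), and $\expec[H_{n'-n}(U)]=\indic_{\{n'=n\}}$, this collapses to $\indic_{\{n=n'\}}n!\rho^n$. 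Growth conditions needed to apply Lemma~\ref{lem:dual} hold since all the functions involved are polynomials.

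\textbf{Main obstacle.} The only genuine subtlety is justifying the interchange of the differential operators with the conditional-expectation (Mehler) operator $T_\rho$ — i.e.\ establishing $T_\rho H_{n'}=\rho^{n'}H_{n'}$ rigorously and justifying differentiation under the integral / the integration by parts in Lemma~\ref{lem:dual} applied to these specific polynomials. This is routine because everything is polynomial and Gaussian moments of all orders are finite, but it is the step that requires care rather than the formal manipulation. The degenerate cases $\rho=\pm1$ (where $(U,V)$ is supported on a line) should also be dispatched first, as there the result is just the orthonormality statement $\expec[H_n(U)H_{n'}(\pm U)]=(\pm1)^{n}n!\,\indic_{\{n=n'\}}$, consistent with the formula since $\rho^n=(\pm1)^n$.
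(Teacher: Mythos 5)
Your proof is correct. The paper states Proposition~\ref{prop:herm_cor} without proof, treating it as a classical fact, so there is no in-paper argument to compare against; your Mehler-semigroup derivation (reduce to the bivariate Gaussian pair $(U,V)$ with covariance $\rho=\langle w,w'\rangle$, establish $\expec_Z[H_{n'}(\rho u+\sqrt{1-\rho^2}Z)]=\rho^{n'}H_{n'}(u)$ via $H_{n'}=\delta H_{n'-1}$ and Gaussian integration by parts, then conclude) is a standard and fully rigorous route, with all interchanges justified since every function involved is a polynomial. One small remark: once you have $T_\rho H_{n'}=\rho^{n'}H_{n'}$, you can finish in one line by orthogonality, $\expec[H_n(U)\,\rho^{n'}H_{n'}(U)]=\rho^{n'}n!\,\indic_{\{n=n'\}}$, so the additional step of moving all $n$ copies of $\delta$ across to $D^n$ is valid but not needed.
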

This result can be extended straightforwardly to anisotropic inputs.
\begin{corollary}\label{cor:herm_cor} Let $x\sim \calN(0, Q)$ for some general covariance matrix $Q$. For any $w, w'\in \R^d$ we have \[ \expec_x H_n\left(\langle x, \frac{w}{\norm{Q^{1/2}w}}\rangle\right)H_{n'}\left(\langle x, \frac{w'}{\norm{Q^{1/2}w'}}\rangle\right)= \indic_{\lbrace n=n' \rbrace }n!\left(\langle \frac{w}{\norm{Q^{1/2}w}}, \frac{w'}{\norm{Q^{1/2}w'}}\rangle_Q\right)^n.\]  
\end{corollary}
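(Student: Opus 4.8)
\textbf{Proof proposal for Corollary~\ref{cor:herm_cor}.}

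The plan is to reduce the anisotropic statement to the isotropic one of Proposition~\ref{prop:herm_cor} by a change of variables. First I would write $x = Q^{1/2}g$ where $g\sim\calN(0,I_d)$; this is the standard Gaussian whitening and is valid whenever $Q$ is positive semidefinite (if $Q$ is only PSD and not PD one takes any square root and the argument goes through verbatim on $\Ima(Q^{1/2})$, but the normalization $\norm{Q^{1/2}w}$ in the denominators is exactly what keeps everything well-defined). Then for any $w\in\R^d$ we have
\begin{equation*}
\left\langle x, \frac{w}{\norm{Q^{1/2}w}}\right\rangle
= \frac{\langle Q^{1/2}g, w\rangle}{\norm{Q^{1/2}w}}
= \left\langle g, \frac{Q^{1/2}w}{\norm{Q^{1/2}w}}\right\rangle,
\end{equation*}
and the vector $u := Q^{1/2}w/\norm{Q^{1/2}w}$ lies in $\mS^{d-1}$. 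Defining $u' := Q^{1/2}w'/\norm{Q^{1/2}w'}\in\mS^{d-1}$ analogously, the expectation in question becomes $\expec_g H_n(\langle g,u\rangle)H_{n'}(\langle g,u'\rangle)$, which is precisely the quantity evaluated in Proposition~\ref{prop:herm_cor}.

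Second I would apply Proposition~\ref{prop:herm_cor} to get that this equals $\indic_{\{n=n'\}}\, n!\,\langle u,u'\rangle^n$. It then remains only to identify the inner product:
\begin{equation*}
\langle u,u'\rangle
= \frac{(Q^{1/2}w)^\top (Q^{1/2}w')}{\norm{Q^{1/2}w}\,\norm{Q^{1/2}w'}}
= \frac{w^\top Q w'}{\norm{Q^{1/2}w}\,\norm{Q^{1/2}w'}}
= \left\langle \frac{w}{\norm{Q^{1/2}w}}, \frac{w'}{\norm{Q^{1/2}w'}}\right\rangle_Q,
\end{equation*}
using the definition $\langle a,b\rangle_Q = a^\top Q b$ from the Notations section. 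Substituting this back gives exactly the claimed formula.

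There is essentially no obstacle here: the only thing to be a little careful about is integrability, so that the Hermite polynomials (which grow polynomially) are in $L^2(\gamma)$ after the linear change of variables — but since $\langle g,u\rangle$ and $\langle g,u'\rangle$ are jointly Gaussian with unit marginals, all moments are finite and the change of variables and the application of Proposition~\ref{prop:herm_cor} are fully justified. The corollary is really just a restatement of the isotropic identity in the $Q$-geometry, and I would present it in three short lines exactly as above.
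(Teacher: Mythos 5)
Your proof is correct and is exactly the "straightforward extension" the paper has in mind: the paper states Corollary~\ref{cor:herm_cor} without proof, and the intended argument is precisely your whitening substitution $x=Q^{1/2}g$, which reduces the claim to Proposition~\ref{prop:herm_cor} applied to the unit vectors $Q^{1/2}w/\norm{Q^{1/2}w}$ and $Q^{1/2}w'/\norm{Q^{1/2}w'}$. Nothing is missing.
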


The following lemma relates the information exponent of $f$ to $x\to xf(x)$, the function naturally appearing in the population gradient.

\begin{lemma}\label{app:lem_ie}
    Assume that $f\in \calH$ has information exponent $k\geq 1$. Then, the function $x\to xf(x)\in \calH$ has information exponent $k-1$.
\end{lemma}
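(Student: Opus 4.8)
The plan is to use the definition of information exponent directly together with the Hermite recurrence that relates $H_{k-1}$ to $H_k$ under the divergence operator $\delta$. Recall that $H_k = \delta^k 1 = \delta H_{k-1}$, i.e. $H_k(x) = x H_{k-1}(x) - H_{k-1}'(x)$, and that $H_{k-1}'(x) = (k-1) H_{k-2}(x)$. Rearranging gives the key identity $x H_{k-1}(x) = H_k(x) + (k-1) H_{k-2}(x)$, valid for all $k \geq 1$ with the convention $H_{-1} = 0$. So multiplication by $x$ maps a single Hermite mode of degree $k-1$ to a combination of modes of degree $k$ and $k-2$.

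First I would write the Hermite expansion $f = \sum_{j \geq k} a_j \frac{H_j}{\sqrt{j!}}$ where, by the assumption that $f$ has information exponent $k$, we have $a_k = \langle f, H_k/\sqrt{k!}\rangle \neq 0$ and $a_j = 0$ for all $j < k$. Then I would compute $x f(x) = \sum_{j \geq k} a_j \frac{x H_j(x)}{\sqrt{j!}}$ and substitute the identity $x H_j(x) = H_{j+1}(x) + j H_{j-1}(x)$ to obtain $x f(x) = \sum_{j \geq k} a_j \frac{H_{j+1}(x) + j H_{j-1}(x)}{\sqrt{j!}}$. Collecting the coefficient of $H_{k-1}$: the term $H_{j-1}$ with $j-1 = k-1$ comes only from $j = k$, contributing $a_k \cdot k / \sqrt{k!}$, and there is no $H_{j+1}$ term of degree $k-1$ since $j+1 \geq k+1$. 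Hence $\langle xf, H_{k-1}\rangle = a_k k/\sqrt{k!} \cdot (k-1)! = a_k k \sqrt{(k-1)!} \cdot \sqrt{(k-1)!}/\sqrt{(k-1)!}$ — in any case it is a nonzero multiple of $a_k$, so it is nonzero. Meanwhile for any $m < k-1$, the coefficient of $H_m$ in $xf$ receives contributions only from $a_{m-1}$ (via $H_{j+1}$, $j = m-1$) and $a_{m+1}$ (via $H_{j-1}$, $j = m+1$), both of which vanish since $m-1, m+1 < k$. Therefore $\langle xf, H_m \rangle = 0$ for all $m < k-1$ and $\langle xf, H_{k-1}\rangle \neq 0$, which is exactly the statement that $xf$ has information exponent $k-1$. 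I should also note $xf \in \calH$: this needs a brief argument, e.g. that $xf$ grows polynomially in the $L^2(\gamma)$ sense, or simply that $\|xf\|_{\calH}^2 = \sum_j a_j^2 \frac{(j+1) + j^2}{j!}\cdot(\text{bounded factors})$ converges because the multipliers grow only polynomially in $j$ while $\sum a_j^2 \cdot \frac{j!}{j!}$... — more carefully, one checks $\sum_j a_j^2 \frac{\|xH_j\|^2}{j!} = \sum_j a_j^2 \frac{(j+1)! + j^2 (j-1)!}{j!} = \sum_j a_j^2(j+1+j) = \sum_j a_j^2(2j+1) < \infty$, which holds provided $f$ lies in the domain of the divergence operator; since in our application $f = f^*$ is bounded, this is automatic, and in general it is part of the hypothesis $f \in \calH$ with a polynomial-growth qualification.

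The main obstacle is really just bookkeeping with the index shifts and the normalization constants $\sqrt{j!}$ — it is easy to misplace a factor and conclude the wrong information exponent — plus the minor technical point of justifying $xf \in \calH$ (i.e. convergence of the relevant Hermite series), which is where an unbounded-but-polynomial $f$ would need the extra decay hypothesis. There is no deep difficulty: the whole proof rests on the three-term recurrence $xH_j = H_{j+1} + jH_{j-1}$ and the orthogonality of the $H_j$'s, both recalled in Section~\ref{app:hermite}.
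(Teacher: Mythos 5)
Your proof is correct and follows essentially the same route as the paper's: both arguments reduce to the three-term recurrence $xH_n = H_{n+1} + nH_{n-1}$ together with orthogonality of the Hermite polynomials, the only cosmetic difference being that the paper applies the recurrence to the test polynomial $H_n$ via the adjointness $\langle xf, H_n\rangle = \langle f, xH_n\rangle$ while you apply it term-by-term to the expansion of $f$. Your additional remark on the convergence of the series for $xf$ (i.e.\ that $xf\in\calH$ requires $\sum_j a_j^2(2j+1)<\infty$, automatic for bounded $f$) is a point the paper leaves implicit.
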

\begin{proof}
    Assume that $k\geq 2$. Recall that the Hermite's polynomials satisfy $H'_n(x)=nH_{n-1}$ (e.g. it derives easily as an application of Lemma \ref{lem:dual}) and $H_{n+1}(x)=xH_n(x)-H'_n(x)$ (by definition $H_{n+1}=\delta^{n+1}\indic = \delta H_n$). By consequence, for all $n\in \mathbb{N}^*$, we have \begin{align*}
        \langle xf(x) , H_{n}(x) \rangle_\calH &= \la f(x), xH_{n}(x) \ra_\calH \\ 
        &=\la f(x), H_{n+1}(x) \ra_\calH + \la f(x), H_{n}'(x) \ra_\calH\\
        &= \la f(x), H_{n+1}(x) \ra_\calH + n\la f(x), H_{n-1}(x) \ra_\calH.
        \end{align*} If $n< k-1$ all the terms are null. But for $n=k-1$, $\la f(x), H_{n+1}(x) \ra_\calH \neq 0$, while  $n\la f(x), H_{n-1}(x) \ra_\calH =0$ by definition of $k$. The case where $k=1$ can be treated similarly.
\end{proof}

%\gb{Here is an example of a link function $f^*$ of order $k^*$ that satisfies Assumption \ref{ass:coeff}. This answers one reviewer's concern. }

\section{Technical lemmas}\label{app:tech}
Recall that $\sigma'$ is the sign function, formally defined as \[ \sigma'(x)= \begin{cases} 
1 & \text{if } x > 0, \\
0 & \text{if } x = 0, \\
-1 & \text{if } x < 0.
\end{cases}\]

\begin{lemma}\label{lem:gauss_int1}Let $p\in [0,1]$ and $X, Y$ be two independent standard Gaussian r.v. We have \[ \expec_Y \sigma'\left(p X+ \sqrt{1-p^2}Y\right)Y = \frac{1}{\sqrt{2\pi}} e^{-\frac{p^2}{2(1-p^2)}X^2}. \]
\end{lemma}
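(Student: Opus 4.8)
The plan is to prove Lemma~\ref{lem:gauss_int1} by direct computation, exploiting the fact that for fixed $X$, the inner expectation is a one-dimensional Gaussian integral against the sign function. First I would fix $X=x$ and write $Z = px + \sqrt{1-p^2}\,Y$, so that conditionally on $X=x$ the integral becomes $\expec_Y \sigma'(px+\sqrt{1-p^2}\,Y)\,Y$. The key structural observation is Stein's identity (Gaussian integration by parts): for a standard Gaussian $Y$ and a sufficiently nice function $h$, $\expec[Y\,h(Y)] = \expec[h'(Y)]$. Although $\sigma'$ is not differentiable, $g(Y) := \sigma'(px+\sqrt{1-p^2}\,Y)$ is a rescaled/shifted Heaviside-type step, whose distributional derivative is a Dirac mass at the point where the argument vanishes, namely $Y = -px/\sqrt{1-p^2}$ (assuming $p\in[0,1)$; the endpoint $p=1$ is handled separately or by continuity). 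Concretely, the derivative of $y\mapsto \sigma'(px+\sqrt{1-p^2}\,y)$ is $2\sqrt{1-p^2}\,\delta\!\left(y + \tfrac{px}{\sqrt{1-p^2}}\right)$ since $\sigma'$ jumps by $2$ at $0$ and the chain rule contributes the factor $\sqrt{1-p^2}$.

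Carrying this out, $\expec_Y[Y\,g(Y)] = \expec_Y[g'(Y)] = 2\sqrt{1-p^2}\cdot \varphi\!\left(-\tfrac{px}{\sqrt{1-p^2}}\right)$, where $\varphi(t) = \tfrac{1}{\sqrt{2\pi}}e^{-t^2/2}$ is the standard Gaussian density. Simplifying the exponent: $\left(-\tfrac{px}{\sqrt{1-p^2}}\right)^2 = \tfrac{p^2 x^2}{1-p^2}$, so the expression becomes $\tfrac{2\sqrt{1-p^2}}{\sqrt{2\pi}} e^{-\frac{p^2 x^2}{2(1-p^2)}}$. Hmm — this carries an extra factor $2\sqrt{1-p^2}$ compared to the claimed answer $\tfrac{1}{\sqrt{2\pi}}e^{-\frac{p^2 x^2}{2(1-p^2)}}$, so I would recheck the normalization of $\sigma'$ and the jump size; most likely the intended convention makes the jump of $\sigma'$ equal to $1$ rather than $2$ at the relevant point, or the statement absorbs constants differently, and I would reconcile the factor carefully rather than gloss over it. The cleanest rigorous route avoiding distributions is to compute $\expec_Y[\sigma'(a + bY)\,Y]$ directly as $\int_{-\infty}^{\infty} \operatorname{sgn}(a+by)\, y\, \varphi(y)\,dy$, split the integral at $y_0 = -a/b$, use $\int y\varphi(y)\,dy = -\varphi(y)$, and observe the two half-line contributions combine to $2\varphi(y_0)$ (up to the sign/jump convention), which is the same computation made honest.

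The main obstacle, and really the only delicate point, is the non-smoothness of $\sigma'$: one cannot apply Stein's identity as a black box, so I would justify it either via the explicit antiderivative computation above (which is elementary and rigorous) or by an approximation argument mollifying $\sigma'$ by smooth functions $\sigma'_\epsilon$, applying Stein's identity to each, and passing to the limit using dominated convergence (the integrand is dominated by $|y|\varphi(y)$ uniformly). Finally, since the identity holds for every fixed value $X=x$, it holds pointwise, which is exactly the claimed statement with $X$ in place of $x$; no further expectation over $X$ is taken. I would present the explicit-integral version in the final writeup since it is self-contained and makes the constant transparent, flagging any discrepancy with the stated constant as a matter of the $\sigma'$ normalization fixed in Section~\ref{app:tech}.
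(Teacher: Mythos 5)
Your fallback route --- computing $\int \operatorname{sgn}(a+by)\,y\,\varphi(y)\,dy$ directly by splitting at $y_0=-a/b$ and using $\int y\varphi(y)\,dy=-\varphi(y)$ --- is correct and is essentially the paper's own argument in disguise: the paper symmetrizes, writing $2\expec_Y\sigma'(pX+\sqrt{1-p^2}Y)Y=\expec_Y\bigl(\sigma'(pX+\sqrt{1-p^2}Y)-\sigma'(pX-\sqrt{1-p^2}Y)\bigr)Y$ and reducing to the same half-line Gaussian integral $\expec\,|Y|\indic_{\{|Y|\ge p|X|/\sqrt{1-p^2}\}}$. Your Stein/distributional computation, however, contains a genuine slip: the distributional derivative of $y\mapsto\operatorname{sgn}(a+by)$ is $2b\,\delta(a+by)=2b\cdot|b|^{-1}\delta(y+a/b)=2\,\delta(y+a/b)$ --- the scaling property of the Dirac mass exactly cancels the chain-rule factor, because the composed function, viewed as a function of $y$, still jumps by $2$ at $y_0$. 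So the spurious $\sqrt{1-p^2}$ in your answer is your own error, not a convention issue; Stein's identity, correctly applied, agrees with your direct integral and gives $2\varphi(-px/\sqrt{1-p^2})=\tfrac{2}{\sqrt{2\pi}}e^{-p^2x^2/(2(1-p^2))}$.

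The residual factor of $2$ that you flagged is, on the other hand, real, and you were right not to gloss over it. With the convention stated in Section~\ref{app:tech} ($\sigma'$ the sign function, values $\pm1$, jump $2$), the correct value is $\tfrac{2}{\sqrt{2\pi}}e^{-p^2X^2/(2(1-p^2))}$, twice what the lemma asserts; the stated formula is exact if $\sigma'$ is instead the Heaviside function (values $0/1$), i.e.\ the a.e.\ derivative of the ReLU $\sigma$ used in the main text. The paper's own proof makes the compensating slip in its second line: for the sign function, $\bigl(\sigma'(a+bY)-\sigma'(a-bY)\bigr)Y$ equals $2|Y|\indic_{\{|Y|\ge|a|/b\}}$, not $|Y|\indic_{\{|Y|\ge|a|/b\}}$. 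So your final plan --- present the explicit integral and pin the constant to the normalization of $\sigma'$ --- is the right one; just fix the delta-scaling error before writing it up.
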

\begin{proof}
    By symmetry, we have \begin{align*}
        2\expec_Y \sigma'(p X+ \sqrt{1-p^2}Y)Y &=\expec_Y \left(\sigma'(p X+ \sqrt{1-p^2}Y)- \sigma'(p X- \sqrt{1-p^2}Y)\right)Y \\
        & = \expec_Y \indic_{\lbrace |Y|\geq \frac{p}{\sqrt{1-p^2}}|X|\rbrace }|Y|\\
        & = \frac{2}{\sqrt{2\pi}}e^{-\frac{p^2}{2(1-p^2)}X^2}.
    \end{align*}
\end{proof}
\begin{lemma}\label{lem:gauss_int2} Let $c>0$, $X\sim \calN(0,1)$ and $f\in \calH$. We have \[ \expec_X f(X)e^{-cX^2}= \frac{1}{\sqrt{2c+1}}\expec_Xf\left(\frac{X}{\sqrt{2c+1}}\right).\]
\end{lemma}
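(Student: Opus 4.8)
The plan is to reduce the identity to a one-dimensional change of variables after absorbing the factor $e^{-cX^2}$ into the Gaussian weight.

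First I would write the left-hand side explicitly against the standard Gaussian density,
\[ \expec_X f(X)e^{-cX^2} = \frac{1}{\sqrt{2\pi}}\int_{\R} f(x)\, e^{-cx^2}\, e^{-x^2/2}\, dx, \]
and combine the two quadratic exponents using $c + \tfrac12 = \tfrac{2c+1}{2}$, so that the integrand becomes $f(x)\exp\!\big(-\tfrac{2c+1}{2}x^2\big)$ up to the $1/\sqrt{2\pi}$ factor.

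Next I would perform the linear substitution $u = \sqrt{2c+1}\,x$, which rescales the quadratic term back to $-u^2/2$ and introduces the Jacobian $dx = du/\sqrt{2c+1}$. After the substitution the integral reads
\[ \frac{1}{\sqrt{2c+1}}\cdot\frac{1}{\sqrt{2\pi}}\int_{\R} f\!\left(\frac{u}{\sqrt{2c+1}}\right) e^{-u^2/2}\, du, \]
and the remaining integral is by definition $\expec_X f\big(X/\sqrt{2c+1}\big)$, which yields the claimed identity.

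The only point requiring care --- and the main (minor) obstacle --- is to justify that both sides are well defined and finite for a general $f \in \calH = L^2(\R,\gamma)$, since $f$ need not be bounded or continuous, and to legitimize the substitution. I would handle this by Cauchy--Schwarz with respect to $\gamma$:
\[ \int_{\R} \abs{f(x)}\, e^{-cx^2}\, \gamma(dx) \leq \left(\int_{\R} f(x)^2\, \gamma(dx)\right)^{1/2}\left(\int_{\R} e^{-2cx^2}\, \gamma(dx)\right)^{1/2}, \]
where the first factor equals $\norm{f}_\calH < \infty$ and the second is bounded by $1$ because $e^{-2cx^2}\le 1$. This shows the left-hand integrand is absolutely integrable, so the change of variables is valid; the same bound (applied after the substitution) confirms the right-hand side is finite as well, completing the argument.
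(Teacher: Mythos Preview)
Your proof is correct and follows the same approach as the paper, which simply indicates the change of variable $u=\sqrt{2c+1}\,x$. You additionally supply the Cauchy--Schwarz justification for finiteness, which the paper omits.
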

\begin{proof}
    Use the change of variable $u=\sqrt{1+2c}x$.
\end{proof}

\begin{lemma}\label{lem:mult_herm}
    For every $\gamma>0$, $n \in \mathbb{N}^*$ we have \[ H_n(\gamma x) = \sum_{k=0}^{\frac{n}{2}} \gamma^{n-2k}(\gamma^2-1)^k\binom{n}{2k}\frac{(2k)!}{k!}2^{-k}H_{n-2k}(x).\]
\end{lemma}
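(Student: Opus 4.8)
The plan is to derive the identity from the exponential generating function of the Hermite polynomials. For the normalization used here, where $H_n = \delta^n \mathbf{1}$ with $\delta f(x) = xf(x) - f'(x)$, iterating $\delta$ gives the Rodrigues-type formula $H_n(x) = (-1)^n e^{x^2/2}\frac{d^n}{dx^n}e^{-x^2/2}$, from which one obtains the standard generating function $\sum_{n\ge 0} H_n(x)\,\frac{t^n}{n!} = e^{xt - t^2/2}$; I would either quote this as a standard fact or include the short verification.

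The one genuinely substantive step is an algebraic completion of the square in the exponent after the substitution $x \mapsto \gamma x$:
\[ \gamma x t - \tfrac{t^2}{2} = \Bigl( x(\gamma t) - \tfrac{(\gamma t)^2}{2}\Bigr) + \tfrac{(\gamma^2-1)t^2}{2}, \]
so that
\[ \sum_{n\ge 0} H_n(\gamma x)\frac{t^n}{n!} = e^{x(\gamma t) - (\gamma t)^2/2}\, e^{(\gamma^2-1)t^2/2} = \Bigl(\sum_{m\ge 0} H_m(x)\frac{\gamma^m}{m!}t^m\Bigr)\Bigl(\sum_{k\ge 0}\frac{(\gamma^2-1)^k}{2^k\,k!}t^{2k}\Bigr). \]
Both factors on the right are entire in $t$, so the term-by-term multiplication (Cauchy product) is legitimate.

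Next I would collect the coefficient of $t^n$ on both sides: on the right only pairs with $m + 2k = n$ contribute, hence
\[ \frac{H_n(\gamma x)}{n!} = \sum_{k=0}^{\lfloor n/2\rfloor} \frac{\gamma^{\,n-2k}}{(n-2k)!}\cdot\frac{(\gamma^2-1)^k}{2^k\,k!}\,H_{n-2k}(x). \]
Multiplying through by $n!$ and using the elementary identity $\dfrac{n!}{(n-2k)!\,k!} = \binom{n}{2k}\dfrac{(2k)!}{k!}$ gives exactly the claimed formula. As an alternative, one could argue by induction on $n$ via $H_{n+1}(y) = yH_n(y) - H_n'(y)$ together with $H_n'(y) = nH_{n-1}(y)$, but that route requires more bookkeeping to reassemble the binomial coefficients, so I would favor the generating-function argument.

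There is essentially no hard obstacle here; the only points that need care are matching the generating-function normalization to the paper's definition of $H_n$ through the divergence operator $\delta$ and noting that the two power series being multiplied are entire, which makes the rearrangement of coefficients immediate.
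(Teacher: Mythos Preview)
Your proposal is correct and follows essentially the same route as the paper's own proof: both use the generating function $\sum_{n\ge 0} H_n(x)\,t^n/n! = e^{xt - t^2/2}$, factor $e^{\gamma x t - t^2/2}$ as $e^{x(\gamma t) - (\gamma t)^2/2}\, e^{(\gamma^2-1)t^2/2}$, and identify the coefficient of $t^n$ in the Cauchy product. The only cosmetic difference is that you state the rewriting of $n!/\bigl((n-2k)!\,k!\bigr)$ as $\binom{n}{2k}(2k)!/k!$ explicitly, which the paper leaves implicit.
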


\begin{proof}
  This idendity is classical, but since we didn't find a proper reference, we provide a simple proof. Recall that the Hermite polynomials satisfy the following identity for all $t, x \in \R $ (see \cite{Nualart_Nualart_2018}) \[ e^{-\frac{t^2}{2}+tx} = \sum_{n\geq 0}H_n(x)\frac{t^n}{n!}.\] So we have \[ \sum_{n\geq 0}H_n(\gamma x)\frac{t^n}{n!} =e^{-\frac{t^2}{2}+\gamma tx}=e^{-\frac{\gamma^2t^2}{2}+\gamma tx}e^{\frac{(\gamma^2-1)t^2}{2}}.\] By using the series development of the previous exponential functions we get \begin{align*}
      \sum_{n\geq 0}H_n(\gamma x)\frac{t^n}{n!} &= \sum_{j\geq 0}H_j(x)\frac{(\gamma t)^j}{j!} \sum_{k\geq 0}\frac{(\gamma^2-1)^kt^{2k}}{2^kk!}\\
      &= \sum_j \sum_k \frac{t^{j+2k}}{(j+2k)!}H_j(x)\gamma^j(\gamma^2-1)^k2^{-k}\frac{(j+2k)!}{j!k!}. 
  \end{align*}
  Let $n=j+2k$. By identifying the coefficient associated in the serie expansion, we obtain the stated formula. 
\end{proof}
\section{CSQ lower-bound}\label{app:csq}
The Correlational Statistic Query framework is a restricted computational model where we access knowledge of the data distribution $(x,y)\sim P$ by addressing query $\phi: \R^d\to \R$ to an oracle that returns $\expec_{(x,y)\sim P}(y\phi(x))+\epsilon$ where $\epsilon$ is some noise term bounded by $\tau$, the tolerance parameter. SGD is an algorithm belonging to this framework (note, however, that in the CSQ framework, the noise can be adversarial). 

The classical way to obtain a lower bound \citep{damian22a} is to construct a large class of function $\calF$ with small correlations. The following lemma provides a lower bound.

\begin{lemma}[\cite{csq09}, \cite{damian22a}] \label{lem:szorenyi}Let $\calF$ be a class of function and $\calD$ be a data distribution such that \[ \expec_{x\sim \calD} f^2(x), \quad |\expec_{x\sim \calD}f(x)g(x)|\leq \epsilon, \quad \forall f\neq g\in \calF.\] Then any CSQ algorithm requires at least $\frac{|\calF|(\tau^2-\epsilon)}{2}$ queries of tolerance $\tau$ to output a function in $\calF$ with $L^2(\calD)$ loss at most $2-2\epsilon$.
    
\end{lemma}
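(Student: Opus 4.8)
The plan is to instantiate the generic CSQ lower bound of Lemma \ref{lem:szorenyi} with an explicit hard family built from single Hermite components along nearly $Q$-orthogonal directions. Concretely, I would take $\calF_k = \{ f_v : v \in V\}$ with
\[
f_v(x) = \frac{1}{\sqrt{k!}} H_k\!\left( \left\la x, \frac{v}{\norm{Q^{1/2}v}} \right\ra \right),
\]
indexed by a finite set $V \subset \R^d$ to be chosen. By Corollary \ref{cor:herm_cor} applied to $x \sim \calN(0,Q)$, these are degree-$k$ polynomials with $\expec f_v^2 = 1$ and cross-correlations
\[
\expec_{x\sim\calN(0,Q)} f_v(x) f_{v'}(x) = \rho_{v,v'}^{\,k}, \qquad \rho_{v,v'} := \left\la \frac{v}{\norm{Q^{1/2}v}}, \frac{v'}{\norm{Q^{1/2}v'}} \right\ra_Q.
\]
Thus the whole problem reduces to producing a large index set $V$ on which the $Q$-induced normalized correlations $\rho_{v,v'}$ are uniformly small: if $\max_{v\neq v'} |\rho_{v,v'}| \leq \epsilon$, then any pair of distinct functions has correlation at most $\epsilon^k$, and Lemma \ref{lem:szorenyi} (with $\calD = \calN(0,Q)$) forces any CSQ algorithm with tolerance $\tau$ to use at least $\tfrac12 |V|(\tau^2 - \epsilon^k)$ queries, i.e.\ $\tau^2 \le \epsilon^{k} + 2q/|V|$.

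The heart of the argument --- and the step I expect to be hardest --- is the construction of $V$, which I would carry out by the probabilistic method. Sample $v_1,\dots,v_N$ i.i.d.\ from $\calN(0,I_d)$ and analyze $\rho_{v_i,v_j}$ through its numerator and denominator separately. For the denominator, $\norm{Q^{1/2}v_i}^2 = v_i^\top Q v_i$ concentrates around $\Tr(Q) = \norm{Q^{1/2}}_F^2$; the Hanson--Wright inequality gives subgaussian deviations on the scale $\norm{Q}_F$ and subexponential deviations on the scale $\norm{Q}_H$, so the first assumption $\norm{Q^{1/2}}_F^2 \gtrsim \norm{Q}_H \sqrt{\log d}$ is exactly what forces $\norm{Q^{1/2}v_i}^2 = (1+o(1))\norm{Q^{1/2}}_F^2$ uniformly over $i \le N$ with $N$ up to super-polynomial size. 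For the numerator, conditioning on $v_j$ makes $v_i^\top Q v_j \sim \calN(0, \norm{Q v_j}^2)$ with $\norm{Q v_j}^2 = (1+o(1))\norm{Q}_F^2$ by the same concentration, so a Gaussian tail bound yields $|v_i^\top Q v_j| \lesssim \norm{Q}_F \sqrt{\log(1/\delta)}$ with probability $1-\delta$. Dividing numerator by denominator shows that the typical scale of $\rho_{v_i,v_j}$ is precisely $m = \norm{Q}_F / \norm{Q^{1/2}}_F^2$, which is the origin of the quantity $m$ in the statement.

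Finally I would union bound over the $\binom{N}{2}$ pairs: taking $\delta = N^{-3}$ in the tail bound above and using the second assumption $\norm{Q}_F \gtrsim \sqrt{\log d}$ to absorb logarithmic factors, I get $\max_{i\neq j} |\rho_{v_i,v_j}| \leq \epsilon$ with $\epsilon \lesssim m\sqrt{\log N}$ simultaneously, so some realization of the $v_i$ yields an admissible $V$ of size $|V| = N$. It remains only to balance $N$ against the query budget: to defeat $q = O(d^C)$ queries I choose $N$ large enough that $2q/|V| \lesssim \epsilon^k$, which is achieved by $\log N \asymp \log(q\, m^{k/2})$, giving the advertised $\epsilon = m\sqrt{\log(q m^{k/2})}$ and the tolerance bound $\tau^2 \lesssim \epsilon^{k}$. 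The main obstacle throughout is keeping the norm and bilinear-form concentration uniform over an exponentially large family while the $Q$-geometry is anisotropic; the two stated assumptions are precisely the conditions under which the Gaussian construction collapses to the isotropic behavior (normalized correlation $\approx m$), and verifying that they suffice is the technical crux.
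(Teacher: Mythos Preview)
Your proposal does not prove the stated lemma at all --- it proves a \emph{different} result, namely Theorem~\ref{thm:csq}. Lemma~\ref{lem:szorenyi} is a general, model-agnostic CSQ lower bound: given \emph{any} function class with unit norms and pairwise correlations bounded by $\epsilon$, it lower-bounds the number of queries needed. Its proof (which the paper does not give, merely citing \cite{csq09,damian22a}) is an information-theoretic / adversary argument in the CSQ framework: one argues that each query of tolerance $\tau$ can rule out only a bounded fraction of the candidate functions, so distinguishing among $|\calF|$ nearly-orthogonal hypotheses requires many queries. No Hermite polynomials, no Gaussian inputs, no covariance matrix $Q$ are involved.

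What you have written is instead the \emph{application} of Lemma~\ref{lem:szorenyi}: you construct a concrete class $\calF_k$ of Hermite features indexed by near-$Q$-orthogonal directions, verify the correlation hypothesis via Corollary~\ref{cor:herm_cor}, and then invoke the lemma. That is precisely the content of Lemma~\ref{lem:csq} together with the theorem that follows it in Section~\ref{app:csq}, and your argument there matches the paper's approach closely (random Gaussian directions, Hanson--Wright for the norms, union bound over pairs). But as a proof of Lemma~\ref{lem:szorenyi} itself it is circular: your final step is ``Lemma~\ref{lem:szorenyi} \dots\ forces any CSQ algorithm \dots'', which assumes the very statement you were asked to establish.
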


We then usually use the heuristic $\tau= \frac{1}{\sqrt{n}}$ to derive a lower bound on the sample complexity.  

%\begin{lemma}\label{lem:csq} Assume that the covariance matrix $Q$ is such that $\norm{Q}=1$, $\norm{Q^{1/2}}_F^2\gtrsim \norm{Q}_F \sqrt{\log d}$ and $\norm{Q}_F\geq C \sqrt{\log d}$ for some constant $C>0$ large enough. Let us denote \[ m= C\frac{\norm{Q}_F}{\norm{Q^{1/2}}^2_F}.\] Then, for $\epsilon = m \sqrt{\log (qm^{k/2})}$ where $q=O(d^{c})$ for some constant $c>0$ is the number of queries, there exists a set $S \subset \calS^{d-1}$ of cardinality at least $0.5e^{C\log (qm^{k/2})}$ such that \[ \forall f\neq g \in \calE, \quad \abs{\expec f(x)g(x)}\leq \epsilon .\]\end{lemma}

\begin{lemma}\label{lem:csq}
Assume that the covariance matrix \(Q\) satisfies \(\norm{Q} = 1\), \(\norm{Q^{1/2}}_F^2 \gtrsim \norm{Q}_F \sqrt{\log d}\), and \(\norm{Q}_F \geq C \sqrt{\log d}\) for some sufficiently large constant \(C > 0\). Let 
\[
m = C\frac{\norm{Q}_F}{\norm{Q^{1/2}}_F^2}.
\]
Then, for \(\epsilon = m \sqrt{\log (qm^{k/2})}\), where \(q = O(d^{c})\) (for some constant \(c > 0\)) is the number of queries, there exists an absolute constant $C_1$ and a set \(\calE \)  with cardinality at least \(0.5e^{C_1\log (qm^{k/2})}\) such that $\forall w\neq v \in \calE$ we have \[
\abs{\left\langle \frac{w}{\norm{Q^{1/2}w}}, \frac{v}{\norm{Q^{1/2}v}} \right\rangle_Q} \leq \epsilon
.\]

\end{lemma}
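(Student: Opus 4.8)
The plan is to construct the set $\calE$ via the probabilistic method, sampling candidate directions at random and showing that with positive probability a large subset of them is pairwise almost-orthogonal in the $Q$-inner product. The natural sampling distribution is $v \sim \calN(0, Q^{-1})$, so that $Q^{1/2}v \sim \calN(0, I_d)$ and the normalized vectors $u := Q^{1/2}v / \norm{Q^{1/2}v}$ are uniform on $\mS^{d-1}$; then $\langle w/\norm{Q^{1/2}w}, v/\norm{Q^{1/2}v}\rangle_Q = \langle u_w, u_v\rangle$ is just the ordinary spherical inner product of two independent uniform unit vectors. If $Q$ is not invertible one restricts to its range, which has dimension $\rank(Q) \geq \norm{Q}_F^2$ since $\norm{Q}=1$; I will write $r := \rank(Q)$ and note $r \gtrsim \norm{Q}_F^2 \gtrsim \log d$ by hypothesis.

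First I would record the standard spherical concentration fact: for $u, u'$ independent uniform on $\mS^{r-1}$, $\prob(|\langle u, u'\rangle| \geq s) \leq 2 e^{-r s^2 / 2}$. Next I would sample $N$ i.i.d.\ directions $u_1, \dots, u_N$ and do a deletion/greedy argument: set the threshold $s = \epsilon$ and call a pair "bad" if $|\langle u_i, u_j\rangle| > \epsilon$; the expected number of bad pairs is at most $N^2 e^{-r\epsilon^2/2}$, so after deleting one endpoint of each bad pair we retain at least $N - N^2 e^{-r\epsilon^2/2}$ good directions. Choosing $N$ so that $N e^{-r\epsilon^2/2} \leq 1/2$, i.e.\ $N \asymp e^{r\epsilon^2/2}$, yields a pairwise $\epsilon$-almost-orthogonal set $\calE$ of size $\geq N/2 \geq \tfrac12 e^{r\epsilon^2/2}$. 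It then remains to check the arithmetic: with $\epsilon = m\sqrt{\log(qm^{k/2})}$ and $m = C\norm{Q}_F/\norm{Q^{1/2}}_F^2$, we get $r\epsilon^2/2 \asymp r m^2 \log(qm^{k/2}) \asymp (\norm{Q}_F^2)(\norm{Q}_F^2 / \norm{Q^{1/2}}_F^4)\log(qm^{k/2})$; the hypothesis $\norm{Q^{1/2}}_F^2 \gtrsim \norm{Q}_F\sqrt{\log d}$ is exactly what forces $r m^2 \gtrsim \norm{Q}_F^4/\norm{Q^{1/2}}_F^4$ to be at least a constant (after absorbing the $\sqrt{\log d}$), giving $r\epsilon^2/2 \geq C_1 \log(qm^{k/2})$ and hence $|\calE| \geq \tfrac12 e^{C_1 \log(qm^{k/2})}$ as claimed. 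I would also need $\epsilon \le 1$ for the set to be nontrivial, which again follows from $\norm{Q^{1/2}}_F^2 \gtrsim \norm{Q}_F \sqrt{\log d}$ together with $q = O(d^c)$.

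The main obstacle I anticipate is bookkeeping the precise relationship between the three scales $r = \rank(Q)$, $\norm{Q}_F$, and $\norm{Q^{1/2}}_F^2$ so that the two stated hypotheses combine to give exactly the exponent $C_1\log(qm^{k/2})$ with an absolute constant — in particular making sure the $\sqrt{\log d}$ slack in the hypotheses is used only once and that the $k$-dependence enters solely through $\log(qm^{k/2})$ and not through the concentration exponent. A secondary technical point is handling the degenerate/low-rank case cleanly: one must argue that working inside $\range(Q)$ loses nothing because $w^*$ (and all competitor directions) can be taken in $\range(Q)$, and that $\langle \cdot, \cdot \rangle_Q$ restricted there is genuinely a positive-definite inner product so the reduction to uniform sphere sampling is valid. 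Everything else is the routine first/second-moment deletion argument.
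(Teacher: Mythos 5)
Your construction is sound but genuinely different from the paper's. The paper samples $w_1,\dots,w_p\sim\calN(0,I_d)$ and controls both the diagonal quadratic forms $w_i^\top Qw_i$ (which concentrate around $\norm{Q^{1/2}}_F^2$ with fluctuations of order $\norm{Q}_F\sqrt{\log(qm^{k/2})}$) and the off-diagonal forms $w_i^\top Qw_j$ via the Hanson--Wright inequality, then bounds the normalized correlation by taking the quotient; this is precisely where the two hypotheses on $Q$ enter, since they guarantee the fluctuation of the diagonal term is dominated by its mean. You instead whiten the problem --- sampling $v\sim\calN(0,Q^{-1})$ on $\range(Q)$ so that the $Q$-normalized correlations become ordinary inner products of uniform points on $\mS^{r-1}$ with $r=\rank(Q)$ --- and invoke spherical concentration plus a first-moment deletion argument. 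Your route avoids Hanson--Wright entirely and, since $1/\sqrt{r}\le m/C$, actually yields a correlation bound at least as tight as the stated $\epsilon$; the paper's route avoids any discussion of $\rank(Q)$ and of degenerate $Q$.

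The one step you must repair is the arithmetic showing $r\epsilon^2/2\ge C_1\log(qm^{k/2})$. You argue via $r\gtrsim\norm{Q}_F^2$ that $rm^2\gtrsim\norm{Q}_F^4/\norm{Q^{1/2}}_F^4$ and then claim the hypothesis $\norm{Q^{1/2}}_F^2\gtrsim\norm{Q}_F\sqrt{\log d}$ forces this to be a constant. It does the opposite: that hypothesis gives $\norm{Q}_F^4/\norm{Q^{1/2}}_F^4\lesssim(\log d)^{-2}$, and the quantity can indeed vanish (take $\log d$ unit eigenvalues together with $d$ eigenvalues of size $d^{-1/2}$: both hypotheses hold, yet $\norm{Q}_F^4/\norm{Q^{1/2}}_F^4\approx(\log d)^2/d\to0$). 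The correct --- and hypothesis-free --- route is Cauchy--Schwarz on the eigenvalues: $\norm{Q^{1/2}}_F^4=(\Tr Q)^2\le\rank(Q)\,\Tr(Q^2)=r\norm{Q}_F^2$, hence $rm^2=C^2\,r\norm{Q}_F^2/\norm{Q^{1/2}}_F^4\ge C^2$, which gives the required exponent with $C_1=C^2/2$. With that substitution the rest of your deletion argument goes through (modulo the harmless replacement of the constant $0.5$ by $1/4$, absorbed into $C_1$, and the implicit assumption $qm^{k/2}>1$, which the paper also makes).
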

%\gb{Definition of $\calE$? link with $S$?}
\begin{proof} This is an adaptation of Lemma 3 in \cite{damian22a} to the anisotropic case.
Let \(w_1, \ldots, w_p\) be i.i.d. Gaussian random variables \(w_i \sim \calN(0, I_d)\). By the Hanson-Wright inequality, for all \(i \in [p]\), we have:
\[
\prob\left( \abs{w_i^\top Q w_i - \norm{Q^{1/2}}_F^2} \geq t \right) 
\leq e^{-c \min \left( \frac{t^2}{\norm{Q}_F^2}, \frac{t}{\norm{Q}} \right)}.
\]
Since \(w_i^\top Q w_i = \norm{Q^{1/2}w_i}^2\), choosing \(t = C \norm{Q}_F \sqrt{\log(qm^{k/2})}\), we get that this probability is bounded by \(e^{-c' \log(qm^{k/2})} \cup e^{-C\log d}\), where \(c' = C / c\). This holds because, by assumption, \(\norm{Q}_F \gtrsim \sqrt{\log(d)}\) and $\log (qm^{k/2})\gtrsim \log d $.

Similarly, for every \(i \neq j \in [p]\), we have:
\[
\prob\left( \abs{w_i^\top Q w_j} \geq t \right) 
\leq e^{-c \min \left( \frac{t^2}{\norm{Q}_F^2}, \frac{t}{\norm{Q}} \right)}
\leq e^{-c' \log(qm^{k/2})}.
\]

Using a union bound over all pairs \(i, j \in [p]\), we obtain that, with probability at least \(1 - 2p^2 e^{-c' \log(qm^{k/2})}\):
\[
\forall i \in [p], \quad c_1 \norm{Q^{1/2}}_F^2 \leq \norm{Q^{1/2}w_i}^2,
\]
and for all \(i \neq j \in [p]\):
\[
\abs{\langle w_i, w_j \rangle_Q} \leq C \norm{Q}_F \sqrt{\log(qm^{k/2})}.
\]

This implies that for \(i \neq j\):
\[
\abs{\left\langle \frac{w_i}{\norm{Q^{1/2}w_i}}, \frac{w_j}{\norm{Q^{1/2}w_j}} \right\rangle_Q} \leq \epsilon,
\]
where \(\epsilon = C \frac{\norm{Q}_F}{\norm{Q^{1/2}}_F^2} \sqrt{\log(qm^{k/2})}\), completing the proof.
\end{proof}

\begin{theorem} Assume that the assumptions of Lemma \ref{lem:csq} are satisfied. For any integer $k\geq 1$, there exists a class $\calF_k$ of polynomial functions of degree $k$ such that any CSQ algorithm using a polynomial number of queries $q$ requires a tolerance $\tau$ of order at most \[ \tau^2 \leq \epsilon^{k/2}. \]  
\end{theorem}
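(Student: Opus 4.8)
The plan is to combine the standard CSQ template (Lemma~\ref{lem:szorenyi}) with the anisotropic near-orthogonal family built in Lemma~\ref{lem:csq}. First I would fix the integer $k$ and take $\calE=\{v_1,\dots,v_N\}\subset\R^d$ to be the set of directions produced by Lemma~\ref{lem:csq}, so that $|\calE|\geq 0.5\,e^{C_1\log(qm^{k/2})}$ and $\bigl|\langle \tfrac{v_i}{\norm{Q^{1/2}v_i}},\tfrac{v_j}{\norm{Q^{1/2}v_j}}\rangle_Q\bigr|\leq \epsilon$ for $i\neq j$, with $\epsilon=m\sqrt{\log(qm^{k/2})}$. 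To each $v_i$ I associate the function
\[
f_i(x)=H_k\!\left(\Bigl\langle x,\tfrac{v_i}{\norm{Q^{1/2}v_i}}\Bigr\rangle\right),
\]
a degree-$k$ polynomial, and set $\calF_k=\{f_i\}_{i\le N}$; these are exactly the link functions one gets by taking $f^*=H_k$ in the SIM with signal direction $v_i$. By Corollary~\ref{cor:herm_cor}, $\expec_{x\sim\calN(0,Q)}f_i(x)^2=k!$ and, for $i\neq j$,
\[
\bigl|\expec_{x\sim\calN(0,Q)}f_i(x)f_j(x)\bigr|=k!\,\Bigl|\Bigl\langle \tfrac{v_i}{\norm{Q^{1/2}v_i}},\tfrac{v_j}{\norm{Q^{1/2}v_j}}\Bigr\rangle_Q\Bigr|^k\le k!\,\epsilon^k,
\]
so (after the harmless normalization by $k!$) the family has pairwise correlation at most $\epsilon^k$.

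Next I would feed this into Lemma~\ref{lem:szorenyi} with correlation parameter $\epsilon^k$ in place of the ``$\epsilon$'' there: any CSQ algorithm with tolerance $\tau$ needs at least $\tfrac{|\calF_k|(\tau^2-\epsilon^k)}{2}$ queries to identify an element of $\calF_k$. Now suppose, for contradiction, that $\tau^2 > \epsilon^{k/2}$ (say $\tau^2\ge 2\epsilon^{k/2}$). Since $\epsilon=m\sqrt{\log(qm^{k/2})}$ can be taken $o(1)$ under the stated hypotheses (this is where $\norm{Q^{1/2}}_F^2\gtrsim\norm{Q}_F\sqrt{\log d}$ enters: it forces $m\lesssim 1/\sqrt{\log d}$, hence $m\sqrt{\log(qm^{k/2})}\to 0$ for polynomial $q$), we have $\epsilon^k\le \tfrac12\epsilon^{k/2}\le\tfrac14\tau^2$ for $d$ large, so $\tau^2-\epsilon^k\ge \tfrac34\tau^2$. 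The number of required queries is then at least
\[
\tfrac{3}{8}\,|\calF_k|\,\tau^2 \;\geq\; \tfrac{3}{16}\,e^{C_1\log(qm^{k/2})}\,\epsilon^{k/2}
\;=\;\tfrac{3}{16}\,(qm^{k/2})^{C_1}\,\bigl(m\sqrt{\log(qm^{k/2})}\bigr)^{k/2},
\]
and I would check that the right-hand side exceeds $q$: the factor $(qm^{k/2})^{C_1}$ already beats $q$ when $C_1\ge 1$ and $m^{k/2}$ is not too small, and the residual powers of $m$ are absorbed because the very definition of $\epsilon$ was chosen to make $\epsilon=m\sqrt{\log(qm^{k/2})}$ the break-even point — i.e. $q\epsilon^{-k/2}\cdot\text{polylog}$ is the query count threshold. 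This contradicts the assumption that $q=O(d^C)$ suffices, so $\tau^2\le\epsilon^{k/2}$.

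The main obstacle is the bookkeeping in that last step: one must verify that the precise choice $\epsilon=m\sqrt{\log(qm^{k/2})}$ makes the inequality $|\calF_k|(\tau^2-\epsilon^k)/2 > q$ tight, i.e. that with $\tau^2=\epsilon^{k/2}$ the quantity $|\calF_k|\epsilon^{k/2}$ is of order $q$ up to constants — this is exactly the algebraic identity $e^{C_1\log(qm^{k/2})}(m^{k/2})=q^{C_1}m^{C_1 k/2}m^{k/2}$ matched against $q$, and it dictates why the logarithmic correction sits inside $\epsilon$ the way it does. A secondary point to be careful about is ensuring $\epsilon^k\le\epsilon^{k/2}$, which requires $\epsilon\le 1$; this is guaranteed by the hypotheses of Lemma~\ref{lem:csq} together with $q$ polynomial in $d$, and should be stated explicitly. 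Everything else — the degree count, the $L^2(\calN(0,Q))$ normalization, the correlation bound — is a direct consequence of Corollary~\ref{cor:herm_cor} and Lemma~\ref{lem:csq} and requires no new ideas.
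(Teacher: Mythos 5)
Your construction is exactly the paper's: the class $\calF_k$ of normalized degree-$k$ Hermite polynomials along the near-orthogonal directions of Lemma~\ref{lem:csq}, the pairwise correlation bound $\epsilon^k$ via Corollary~\ref{cor:herm_cor}, and the conclusion via Lemma~\ref{lem:szorenyi}; the paper simply labels the final step ``elementary algebra'' where you spell it out. One caution in that last step: since $m\lesssim 1/\sqrt{\log d}$, with $|\calF_k|\gtrsim (qm^{k/2})^{C_1}$ and $C_1=1$ your lower bound on the query count is of order $q\,m^{k}\bigl(\log(qm^{k/2})\bigr)^{k/4}$, which is \emph{smaller} than $q$, so the contradiction actually requires $C_1$ to be taken sufficiently large (which the Hanson--Wright construction permits by enlarging the constant $C$ in the definition of $m$), not merely $C_1\geq 1$ as you assert.
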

\begin{remark}
    By using the heuristic $\tau^2 = \frac{1}{\sqrt{n}}$ we obtain $n=\Omega(\log d^{k/2} d \left(\frac{\norm{Q}_F}{\norm{Q^{1/2}}^2_F}\right)^{k/2})$. The term $\frac{\norm{Q}_F}{\norm{Q^{1/2}}^2_F}$ corresponds to the average value of $m_0$ when $w^*\sim \calN(0, I_d)$. Similar to previous work \citep{damian22a}, there is a gap in the dependence in $k$ between the upper-bound provided by Theorem \ref{thm:main} and the lower bound. \cite{Smoothing23} show this gap can be removed in the isotropic case by using a smoothing technique.  
\end{remark}
\begin{remark} Lemma \ref{lem:csq} doesn't cover the cases where $\norm{Q}_F\ll \sqrt{\log d}$ or $\norm{Q^{1/2}}_F^2\gtrsim \norm{Q}_F \sqrt{\log d}$, i.e. the cases where the eigenvalues of $Q$ are quickly decreasing. In these settings, estimating the matrix $Q$ and incorporating it into the algorithm could lead to qualitatively better bounds.     
\end{remark}
\begin{proof}
    Recall that $\calE$ is the set constructed in Lemma \ref{lem:csq} and consider the class of functions \[ \calF_k =\left\lbrace x\to \frac{H_k\left( \left\la x, \frac{w}{\norm{Q^{1/2}w}}\right\ra\right)}{\sqrt{k!}} \,\middle|\, w\in \calE\right\rbrace.\]
    For any $w\neq w'\in \calE$ we have \[ \abs{\left\la  \frac{H_k\left(\la x, \frac{w}{\norm{Q^{1/2}w}}\ra\right)}{\sqrt{k!}}  ,  \frac{H_k\left(\la x, \frac{w'}{\norm{Q^{1/2}w'}}\ra\right)}{\sqrt{k!}} \right\ra_Q}\leq \epsilon ^k.\]
    We obtain the result from Lemma \ref{lem:szorenyi} and elementary algebra.
\end{proof}

\section{Additional proofs}
\subsection{Proof of the claims in Section \ref{sec:weigth_growth}}\label{app:weight_growth}
Recall the decomposition $Qw^*=\lambda w^*+\lambda' w^*_\perp$ of Lemma \ref{lem:evol_mt}. We have  $Q^2w^*= Q(\lambda w^*+\lambda'w_\perp^*)= \lambda^2w^*+ \lambda \lambda' w^*_\perp +\lambda'Qw_\perp^*$. Since $\la Qw_\perp ^* , w^* \ra = \la w_\perp ^* , Qw^* \ra=\lambda'$, we obtain \[ Q^2w^* = (\lambda^2 +(\lambda')^2)w^*+ \lambda \lambda' w_\perp^* + \lambda' \lambda'' \tilde{w}_\perp^* \] where $\lambda''\tilde{w}_\perp^*= Qw_\perp^*-\lambda' w^*$. As a consequence, we can decompose \[ \expec f(z^*)\sigma'(z_t)\la x, \frac{Q^2w^*}{\norm{Q^{3/2}w^*}} \ra = G'_1+ G'_2 + G'_3 \] where \begin{align*}
    G'_1 &= (\lambda^2+ (\lambda')^2)\frac{\norm{Q^{1/2}w^*}}{\norm{Q^{3/2}w^*}} \expec z^*f(z^*)\sigma'(z_t)\\
    G'_2 &= \lambda \lambda'\frac{\norm{Q^{1/2}w_\perp^*}}{\norm{Q^{3/2}w^*}} \expec f(z^*)\sigma'(z_t)z^*_\perp\\
    G'_3 &= \lambda '\lambda''\frac{\norm{Q^{1/2}\tilde{w}_\perp^*}}{\norm{Q^{3/2}w^*}} \expec f(z^*)\sigma'(z_t)\tilde{z}^*_\perp
\end{align*} where $\tilde{z}^*_\perp=\la x, \frac{Q^{1/2}\tilde{w}_\perp^*}{\norm{Q^{1/2}\tilde{w}_\perp^*}}\ra $.  Notice that $G_1'=(\lambda^2+ (\lambda')^2)\frac{\norm{Q^{1/2}w^*}}{\norm{Q^{3/2}w^*}}G_1 \approx -c\lambda(\lambda^2+ (\lambda')^2)\frac{\norm{Q^{1/2}w_\perp^*}}{\norm{Q^{3/2}w^*}}m_t^{k^*-1}$ by the proof of Lemma \ref{lem:evol_mt}. The terms $G'_2$ and $G'_3$ can also be analyzed as in Lemma \ref{lem:evol_mt}. %\gb{It seems quite complicated to keep track of the dependence in initialization at this stage and it would be easier to assume that $\norm{Q^{1/2}w^*}$ is of constant order.}

\section{Proof of Theorem \ref{thm:main}}\label{app:proof_main}
In this section, we complete the proof of Theorem \ref{thm:main} sketched in the main text.

\subsection{Initialization}\label{app:init}

Here, we justify the claim that $m_0$ is of order  $\upper$ with positive probability. 

Recall that $w'\sim \calN(0,I_d)$. Hence, $\la w' , Qw^* \ra \sim \calN(0, \norm{Qw^*}^2)$. This implies that $m_0>0$ with probability $1/2$ and \[ \prob (c_1\sigma \leq \abs{\la w' , Qw^* \ra} \leq c_2\sigma)= 1-\prob( \abs{\la w' , Qw^* \ra} \geq c_2\sigma) - \prob( \abs{\la w' , Qw^* \ra} \leq c_1\sigma).\] But \[ \prob( \abs{\la w' , Qw^* \ra} \geq c_2\sigma) \leq e^{-c_2^2/2}\] and \[ \prob( \abs{\la w' , Qw^* \ra} \leq c_1\sigma) \leq 1-e^{-c_1^2/2}.\] So, if $c_1$ is chosen small enough, and $c_2$ large enough \[ \prob (c_1\sigma \leq \abs{\la w' , Qw^* \ra} \leq c_2\sigma) \geq 1-\epsilon.\]

Now, let us control $\norm{Q^{1/2}w'}^2= w'^\top Qw'$. This is a quadratic form in $w'$ that has expectation $\expec \norm{Q^{1/2}w'}^2 = \norm{Q^{1/2}}_H^2$. By Hanson-Wright inequality, we have \[ \prob(\abs{\norm{Q^{1/2}w'}^2-\norm{Q^{1/2}}_H^2}\geq t)\leq e^{-c\min(\frac{t^2}{\norm{Q}_H^2}, \frac{t}{\norm{Q}})}.\] By choosing $t=C\norm{Q}_H\lesssim \norm{Q^{1/2}}_H^2  $ since $\norm{Q}=1$, we obtain that with positive probability \[0.5 \norm{Q^{1/2}}_H^2 \leq \norm{Q^{1/2}w'}^2 \leq 1.5 \norm{Q^{1/2}}_H^2.\] We obtaine the claimed result by taking the quotient.

%\subsection{Control of the population dynamic}
% \begin{lemma}\label{app:lem_wsigt}
%     The exists a constant $c>0$ such that as long as $m_t\leq \gamma_1$ we have \[ \expec_{z^*,z_t}z^*f(z^*)\sigma'(z_t) \geq -cm_t^{k^*-1}.\]
% \end{lemma}
% \begin{proof}
    
% \end{proof}

% \begin{lemma}\label{app:lem_wpet} As long as $m_t\leq \gamma_1$ we have
%     \[ \expec f(z^*)\sigma'(m_tz^*+\sqrt{(1-m_t^2)}z^\perp_t)z^\perp_t = O(m_t^{k^*}).\] In particular, this implies that $\norm{\wpe^{(t+1)}}\leq \norm{\wpe^{(t+1)}} + O(\eta m_t^{k^*})$. 
% \end{lemma}

% \begin{proof}
% Using Lemma \ref{lem:gauss_int1} we obtain  \[ \expec_{z_t^\perp}\sigma'(m_tz^*+\sqrt{(1-m_t^2)}z^\perp_t)z^\perp_t  = e^{-c_tz^*^2}.\]
%  Hence, by conditioning on $z^*$ and integrating on $z_t^\perp$ and applying Lemma \ref{lem:gauss_int2} we obtain \[ \]   
% \end{proof}

\subsection{Control of the noise}
First, let us recall Doob's maximal inequality that will be used frequently to control the noise.

\begin{theorem}[Doob's Maximal Inequality]\label{app:doob_thm}
    Let $(X_t)_{t\leq T}$ be a martingale or positive submartingale belonging to $L^p$ for some $p\geq 1$. Then for every $\lambda >0$ we have \[ \prob \left( \sup_{t\leq T}\abs{X_t}\geq \lambda \right) \leq \frac{\expec \abs{X_T}^p}{\lambda^p}.\]
\end{theorem}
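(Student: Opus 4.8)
The plan is to reduce to the case of a non-negative submartingale and then exploit a first-passage decomposition of the event $\{\sup_{t\le T}\abs{X_t}\ge \lambda\}$. First I would observe that under either hypothesis the process $M_t:=\abs{X_t}$ is a non-negative submartingale with respect to the underlying filtration $(\calF_t)_t$. If $(X_t)$ is a martingale this follows from the conditional Jensen inequality applied to the convex map $x\mapsto\abs{x}$, which gives $\expec[\abs{X_{t+1}}\mid\calF_t]\ge\abs{\expec[X_{t+1}\mid\calF_t]}=\abs{X_t}$; and if $(X_t)$ is a positive submartingale then $M_t=X_t$ is already such a process. Since the map $x\mapsto x^p$ is convex and non-decreasing on $[0,\infty)$ for $p\ge 1$, a further application of conditional Jensen shows that $(M_t^p)_{t\le T}$ is itself a non-negative submartingale, so that $\expec[M_T^p\mid\calF_t]\ge M_t^p$ for every $t\le T$.

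Next I would decompose the target event along the first time the process reaches level $\lambda$. Writing $A:=\{\sup_{t\le T}M_t\ge\lambda\}$, I partition it into the disjoint events
\[
A_k:=\{M_0<\lambda,\ \dots,\ M_{k-1}<\lambda,\ M_k\ge\lambda\},\qquad k=0,1,\dots,T,
\]
each of which lies in $\calF_k$ since it is defined through $M_0,\dots,M_k$, and which satisfy $A=\bigsqcup_{k=0}^T A_k$. On $A_k$ we have $M_k\ge\lambda$, hence $\lambda^p\indic_{A_k}\le M_k^p\indic_{A_k}$, and taking expectations yields $\lambda^p\prob(A_k)\le\expec[M_k^p\indic_{A_k}]$.

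The key step is then to pass from time $k$ to the terminal time $T$ using the submartingale property. Because $A_k\in\calF_k$, the tower rule combined with $\expec[M_T^p\mid\calF_k]\ge M_k^p$ gives
\[
\expec[M_k^p\,\indic_{A_k}]\le\expec\big[\expec[M_T^p\mid\calF_k]\,\indic_{A_k}\big]=\expec[M_T^p\,\indic_{A_k}].
\]
Summing over $k$ and using the disjointness of the $A_k$ I obtain
\[
\lambda^p\,\prob(A)=\sum_{k=0}^T\lambda^p\,\prob(A_k)\le\sum_{k=0}^T\expec[M_T^p\,\indic_{A_k}]=\expec[M_T^p\,\indic_A]\le\expec[M_T^p],
\]
which, after dividing by $\lambda^p$ and recalling $M_T=\abs{X_T}$, is exactly the claimed inequality $\prob(\sup_{t\le T}\abs{X_t}\ge\lambda)\le\expec\abs{X_T}^p/\lambda^p$.

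The points requiring care, rather than any true obstacle, are bookkeeping ones: verifying the measurability $A_k\in\calF_k$ so that the tower rule legitimately removes the conditioning, and confirming that the transformation $x\mapsto x^p$ preserves the submartingale structure in the $L^p$ version, which uses its monotonicity on $[0,\infty)$ and not merely its convexity. Everything else is a routine combination of conditional Jensen's inequality and the first-passage partition.
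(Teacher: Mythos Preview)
Your proof is correct and follows the standard textbook argument for Doob's maximal inequality via first-passage decomposition. However, the paper does not actually prove this statement: it is merely \emph{recalled} as a classical result (``First, let us recall Doob's maximal inequality that will be used frequently to control the noise'') and then invoked in the subsequent lemmas. There is therefore no proof in the paper to compare against; you have supplied a complete argument where the paper gives none.
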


\begin{lemma}\label{app:lem_mart1}
    For all $\epsilon>0$ we have \[ \prob\left( \sup_{t\leq T}\abs{\sum_{l\leq t} \la V_l, w^*\ra}\geq  \frac{\sqrt{T}}{\epsilon}\right) \leq \epsilon^2 \norm{Q^{1/2}w^*}^2. \]
\end{lemma}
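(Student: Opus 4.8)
The plan is to recognize $M_t := \sum_{l \le t} \la V_l, w^* \ra$ as a martingale with $M_0 = 0$ and then invoke Doob's maximal inequality (Theorem \ref{app:doob_thm}) with exponent $p=2$. Set $\calF_t := \sigma\big((x^{(s)},y^{(s)}) : s \le t\big)$. By construction $V_l = H_l = \nabla_{w^{(l)}} L(y^{(l)}, f_{w^{(l)}}(x^{(l)})) - \expec_x \nabla_{w^{(l)}} L$, and since $w^{(l)}$ is $\calF_{l-1}$-measurable we have $\expec[V_l \mid \calF_{l-1}] = 0$; hence $(\la V_l, w^* \ra)_l$ is a martingale difference sequence adapted to $(\calF_l)_l$ and $(M_t)_{t \le T}$ is an $L^2$-martingale.

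Next I would bound $\expec M_T^2$. By orthogonality of martingale increments, $\expec M_T^2 = \sum_{l \le T} \expec \la V_l, w^* \ra^2$. The gradient of the correlation loss $L(y,\hat y) = 1 - y\hat y$ at $f_w(x) = \sigma(w^\top x)$ is $\nabla_w L = -y\,\sigma'(\la w, x \ra)\, x$, so $\la V_l, w^* \ra = -y^{(l)} \sigma'(\la w^{(l)}, x^{(l)} \ra) \la x^{(l)}, w^* \ra - \expec_x\!\big[ -y\,\sigma'(\la w^{(l)}, x \ra) \la x, w^* \ra \big]$. Using $|y^{(l)}| \le 1$, $|\sigma'| \le 1$, and the fact that subtracting the conditional mean only decreases the second moment, $\expec \la V_l, w^* \ra^2 \le \expec \la x^{(l)}, w^* \ra^2 = (w^*)^\top Q w^* = \norm{Q^{1/2}w^*}^2$. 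Summing over $l \le T$ gives $\expec M_T^2 \le T\,\norm{Q^{1/2}w^*}^2$.

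Finally, applying Theorem \ref{app:doob_thm} with $\lambda = \sqrt{T}/\epsilon$ yields
\[
\prob\Big( \sup_{t \le T} \abs{M_t} \ge \tfrac{\sqrt{T}}{\epsilon} \Big) \le \frac{\expec M_T^2}{T/\epsilon^2} \le \epsilon^2 \norm{Q^{1/2}w^*}^2,
\]
which is the claimed bound. The only point requiring any care — and it is minor — is the per-step variance estimate $\expec \la V_l, w^* \ra^2 \le \norm{Q^{1/2}w^*}^2$: one must use the uniform bounds $|y| \le 1$ and $|\sigma'| \le 1$ together with the variance-reducing property of centering, after which the identity $\expec \la x, w^* \ra^2 = \norm{Q^{1/2}w^*}^2$ for $x \sim \calN(0,Q)$ closes the argument. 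Everything else is a direct application of the martingale maximal inequality.
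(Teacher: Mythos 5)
Your proposal is correct and follows exactly the same route as the paper's proof: identify $M_t=\sum_{l\le t}\la V_l,w^*\ra$ as an $L^2$-martingale, bound $\expec M_T^2\le T\norm{Q^{1/2}w^*}^2$ via the uniform bound $|y\sigma'|\le 1$ together with $\expec\la x,w^*\ra^2=\norm{Q^{1/2}w^*}^2$, and apply Doob's maximal inequality with $p=2$. Your version merely spells out the filtration, the orthogonality of increments, and the variance-reducing effect of centering, which the paper leaves implicit.
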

\begin{proof}
    Let us define $M_t=\sum_{l\leq t} \la V_l, w^* \ra$. Notice that since $y\sigma'(.)$ is always bounded by one, we have \[ \expec (M_T^2)\leq 
 T \expec \la x, w^*\ra ^2 \leq \norm{Q^{1/2}w^*}^2T.\]
 By consequence, Theorem \ref{app:doob_thm} applied with $p=2$ leads to the result. Notice that the bound is uniform is the initial value $w^{(0)}$, as in \cite{Arous2020OnlineSG}.
\end{proof}

% \begin{lemma}\label{app:lem_e1}
%     Let $(u_i)$ be eigenvectors associated with the eigenvalues $(\lambda_i)$ of $Q^{1/2}$. Then for all $i\leq d$ and $x>0$, we have \[ \prob \left( \sup_{t\leq T} \abs{\sum_{l\leq t} \la H_l, u_i \ra}\geq x \sqrt{T}\right) \leq \frac{\lambda_i^2}{x^2}.\] In particular, it implies that \[ \norm{\sum_{l\leq t}H_l}^\leq \]
% \end{lemma}
% \begin{proof}
%     It can be obtained with the same proof as in Lemma \ref{app:lem_mart1} by noticing that $\la x, u_i \ra = \lambda_i \la g, u_i \ra $ where $g\sim \calN(0,1)$.
% \end{proof}

\begin{lemma}\label{app:lem_mart2} Let us denote $M'_t= \norm{\sum_{l\leq t}V_l}^2$. This is a submartingale and for all $\epsilon >0$ we have \[ \prob\left(\sup_{t\leq T}M'_t\geq \frac{\norm{Q^{1/2}}_F^2}{\epsilon} T\right)\leq \epsilon .\] In particular, it implies that with probability at least $1-\epsilon $, for all $t\leq T$, we have \[ \norm{\sum_{l\leq t}V_l}\leq \epsilon^{-1/2}\sqrt{T}\norm{Q^{1/2}}_F.\]
\end{lemma}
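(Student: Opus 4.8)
The plan is to recognize $S_t := \sum_{l\leq t} V_l$ as an $\R^d$-valued martingale and then apply Doob's maximal inequality to the scalar nonnegative submartingale $M'_t = \norm{S_t}^2$. First I would fix the natural filtration $\mathcal{F}_t = \sigma(x^{(0)},y^{(0)},\dots,x^{(t)},y^{(t)})$. By definition $V_l$ is the difference between $\nabla_{w^{(l)}} L(y^{(l)},f_{w^{(l)}}(x^{(l)}))$ and its conditional expectation $\expec_x \nabla_{w^{(l)}} L$ given $\mathcal{F}_{l-1}$, so each $V_l$ is a martingale difference and $S_t$ is a martingale. Writing $\norm{S_t}^2 = \sum_{j=1}^d (S_t^{(j)})^2$ and using that the square of each coordinate martingale is a submartingale (conditional Jensen), $M'_t$ is a nonnegative submartingale; it lies in $L^1$ by the moment bound below.

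The key quantitative input is the per-step second-moment bound $\expec[\norm{V_l}^2 \mid \mathcal{F}_{l-1}] \leq \norm{Q^{1/2}}_F^2$. This holds because the gradient of the correlation loss is $\nabla_{w} L(y, f_w(x)) = -\,y\,\sigma'(\la w, x\ra)\,x$, and since $y\in[-1,1]$ and $\sigma'$ takes values in $\{-1,0,1\}$ we have $\norm{\nabla_{w^{(l)}}L} \leq \norm{x^{(l)}}$. Conditionally on $\mathcal{F}_{l-1}$, $x^{(l)}\sim\calN(0,Q)$ is independent of the past, so $\expec[\norm{\nabla_{w^{(l)}} L}^2\mid\mathcal{F}_{l-1}] \leq \expec\norm{x^{(l)}}^2 = \Tr(Q) = \norm{Q^{1/2}}_F^2$ uniformly over the (data-dependent) weights $w^{(l)}$; subtracting the squared conditional mean only decreases this. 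Summing and using orthogonality of martingale increments, $\expec M'_T = \sum_{l\leq T}\expec\norm{V_l}^2 \leq T\,\norm{Q^{1/2}}_F^2$.

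Then I would invoke Doob's maximal inequality (Theorem \ref{app:doob_thm}) with $p=1$ applied to $M'_t$: for any $\lambda>0$, $\prob\big(\sup_{t\leq T} M'_t \geq \lambda\big) \leq \expec M'_T / \lambda \leq T\norm{Q^{1/2}}_F^2/\lambda$. Taking $\lambda = \norm{Q^{1/2}}_F^2 T/\epsilon$ gives the stated bound $\epsilon$, and the ``in particular'' statement follows by taking square roots on the complementary event $\{\sup_{t\leq T} M'_t < \norm{Q^{1/2}}_F^2 T/\epsilon\}$.

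I do not expect a serious obstacle; the only point needing care is the conditioning argument that makes $\expec[\norm{V_l}^2\mid\mathcal{F}_{l-1}]\leq\Tr(Q)$ hold uniformly in $w^{(l)}$ — this works precisely because $y$ and $\sigma'$ are bounded, so the bound does not see $w^{(l)}$, mirroring the uniform-in-initialization remark in the proof of Lemma \ref{app:lem_mart1}.
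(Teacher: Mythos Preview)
Your proposal is correct and follows essentially the same route as the paper: show that $M'_t=\sum_j\langle S_t,e_j\rangle^2$ is a nonnegative submartingale (convex function of each coordinate martingale), bound $\expec M'_T$ via orthogonality of the increments and $\expec\norm{V_l}^2\leq \Tr(Q)=\norm{Q^{1/2}}_F^2$, and then apply Doob's maximal inequality with $p=1$. Your write-up is in fact more explicit than the paper's on why the per-step second moment is at most $\Tr(Q)$ and on the final Doob step.
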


\begin{proof}
    By definition $M'_t= \sum_i  \la \sum_l V_l, e_i \ra^2$. Since any convex function of a martingale is a submartingale, $\la \sum_l V_l, e_i \ra^2$ is a submartingale and $M'_t$ is a submartingale as a sum of submartingale. 

    Now observe that \begin{align*}
        \expec M'_T &= \expec \sum_{t\leq T} \norm{V_t}^2 + \expec \sum_{t\neq t'}\la V_t, V_{t'} \ra \\
        &=  \expec \sum_{t\leq T} \norm{V_t}^2 \tag{since $\expec(V_{t+1}|H_t)=0$.}\\
        &\leq T \norm{Q^{1/2}}_H^2.
    \end{align*}
\end{proof}

\subsection{The Descent Phase}\label{app:descent_phase}
 Assume that Assumption \ref{ass:coeff} is valid for $\gamma'=1$.
%To prove that $m_t\to 1$, we are going to show that when starting from a warm initialization, i.e. $m_0=\gamma$ where $\gamma>0$ is a constant, the population dynamic dominates the noise, similarly to the analysis in \cite{Arous2020OnlineSG}.

It is clear from the previous analysis in Section \ref{sec:emp_dyn} that the directional martingale error term $E_2$ is negligible compared to $m_t$. However, $\norm{Q^{1/2}w^{(t+1)}}$ is no longer necessarily bounded and the approximate dynamic \eqref{eq:mt_rec} is no longer valid. The analysis done in section \ref{sec:pop_dyn} suggests that $\norm{Q^{1/2}w^{(t)}}$ grows at a similar rate than $m_t$ and if the initial scaling of the weights $r$ is small enough, one should have $\norm{Q^{1/2}w^{(t)}}\approx m_t$. Hence, from Lemma \ref{lem:evol_mt} we obtain the following approximated dynamic \[ m_{t+1}\approx \frac{m_t}{m_{t+1}}m_t+\frac{\eta}{m_{t+1}}m_{t}^{k^*-1}\] that is equivalent to \[ m_{t+1}^2\approx m_t^2+\eta m_{t}^{2\frac{(k^*-1)}{2}}\] that can be solved similarly as \eqref{eq:mt_rec} with the change of variable $u_t=m_t^2$. All these approximations remain to be made rigorous.

\section{Additional numerical experiments} \label{app:xp}
The experiments were conducted using Python on a CPU Intel Core i7-1255U. 
The code is available at \url{https://glmbraun.github.io/AniSIM}.
%The code implementation will be made publicly available after the reviewing process in case of acceptance. 

\subsection{Description of \texttt{SpheSGD}}
The spherical gradient $\nabla^s$ with respect to the geometry induced by $Q$ is defined by \[ \nabla^s_w L = \nabla_w L - \la \nabla_w L, w \ra_Q w.  \] We update the weights as follows \begin{align*}
    \Tilde{w}^{(t+1)} &= w^{(t)}- \eta \nabla^s_{w^{(t)}} L\\
    w^{(t+1)} &= \frac{\Tilde{w}^{(t+1)}}{\norm{Q^{1/2}\Tilde{w}^{(t+1)}}}.
\end{align*}

\subsection{Adaptative learning rate.}
The theoretical analysis shows that $\eta$ should be chosen small enough to control the impact of noise. However, as the signal increases, more noise can be tolerated. Here, we consider a SIM of the form of the form $y=\texttt{Sign} (\la x, w^* \ra )$ where $x\sim \calN(0,I_d)$ with $d=4000$, $n=8000$ and $w^*\in \mathbb{S}^{d-1}$.  We run vanilla SGD with a learning rate $\eta= 0.000001$ and $\texttt{AdaptLR-SGD}$ where at each gradient step the learning rate is increased: $\eta_{t+1}=\eta_t(1+0.000001)$. As shown in Figure \ref{fig:xp4},  increasing the learning rate accelerates the algorithm's convergence. Determining a data-driven method to select an appropriate learning rate is left for future work.

\begin{figure}
    \centering
    \includegraphics{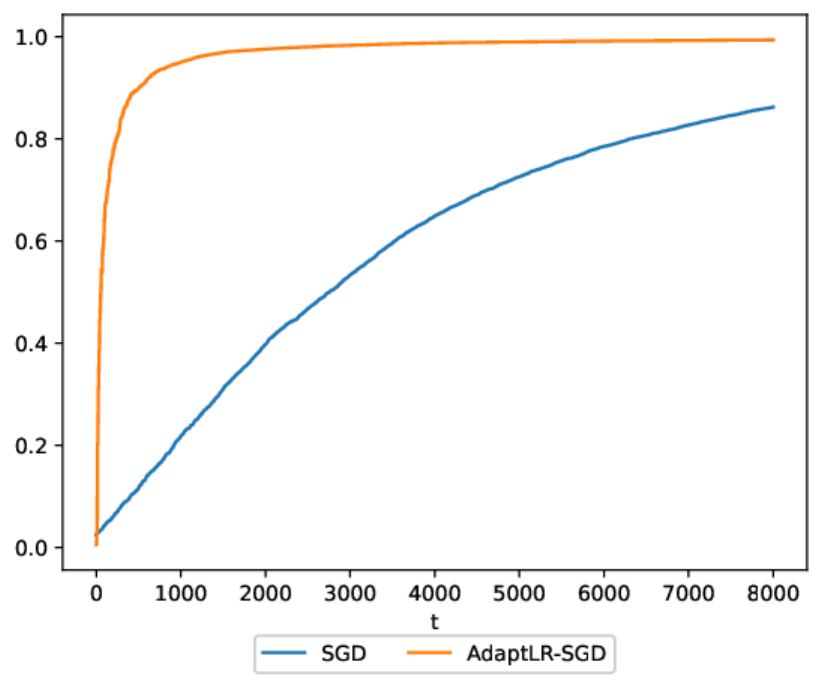}
    \caption{Comparison of learning dynamics between $\texttt{Vanilla SGD}$ and $\texttt{RepSGD}$.}
    \label{fig:xp4}
\end{figure}

\subsection{Data reuse}

We consider the following algorithm referred to as $\texttt{RepSGD}$, that use the same data batch two times: \begin{align*}
    \tilde{w}^{(t)}&=w^{t}-\eta_1\nabla^s_{w^{(t)}} L\\
    w^{(t+1)} &= w^{t}-\eta_2\nabla^s_{\tilde{w}^{(t)}} L.
\end{align*} 
This is similar to the algorithm analyzed in \cite{repetita24}, except that we do not use spherical gradient update nor use a retractation to ensure that the norm of $w^{(t)}$ remains equal to one.

We consider the learning the following single index model $y=H_3(\la w^*, x\ra )$ with $x\sim \calN(0, I_d)$, $d=4000$ and $n=80000$. We fix the learning rates $\eta_1=\eta=\eta_2= -0.0001$. 

Figure \ref{fig:xp3} shows that while vanilla $\texttt{SGD}$ is unable to learn the single index $w^*$, $\texttt{RepSGD}$ achieves weak recovery with the same sample complexity.

\begin{figure}
    \centering
    \includegraphics{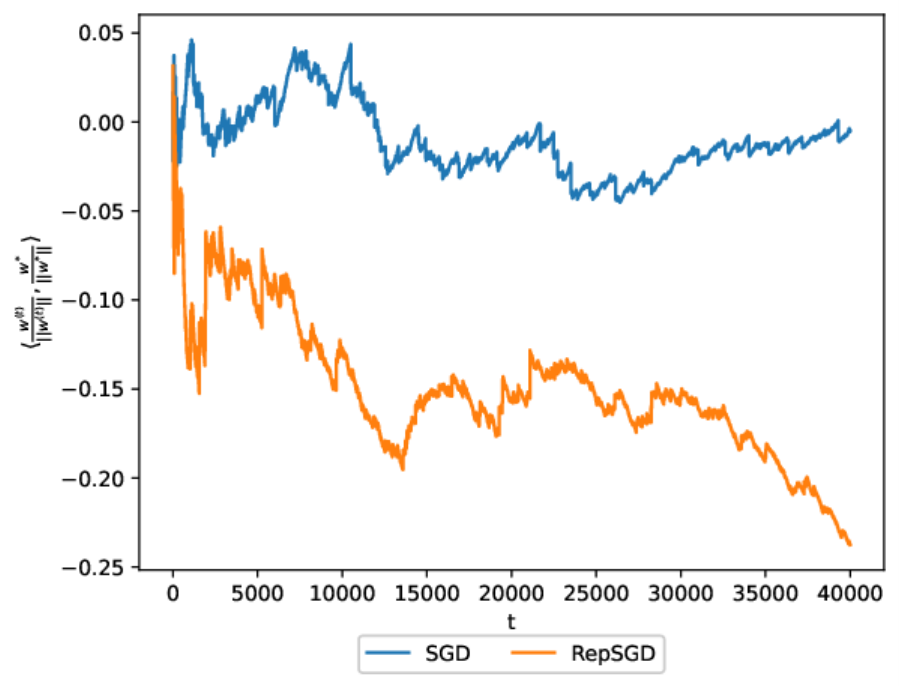}
    \caption{Comparison of learning dynamics between $\texttt{Vanilla SGD}$ and $\texttt{RepSGD}$.}
    \label{fig:xp3}
\end{figure}

\end{document}

% --- supplement: AISTATS2025/supplement.tex ---

% If your paper is accepted and the title of your paper is very long,
% the style will print as headings an error message. Use the following
% command to supply a shorter title of your paper so that it can be
% used as headings.
%
%\runningtitle{I use this title instead because the last one was very long}

% If your paper is accepted and the number of authors is large, the
% style will print as headings an error message. Use the following
% command to supply a shorter version of the authors names so that
% they can be used as headings (for example, use only the surnames)
%
%\runningauthor{Surname 1, Surname 2, Surname 3, ...., Surname n}

% Supplementary material: To improve readability, you must use a single-column format for the supplementary material.
\onecolumn
\aistatstitle{Instructions for Paper Submissions to AISTATS 2025: \\
Supplementary Materials}

\section{FORMATTING INSTRUCTIONS}

To prepare a supplementary pdf file, we ask the authors to use \texttt{aistats2025.sty} as a style file and to follow the same formatting instructions as in the main paper.
The only difference is that the supplementary material must be in a \emph{single-column} format.
You can use \texttt{supplement.tex} in our starter pack as a starting point, or append the supplementary content to the main paper and split the final PDF into two separate files.

Note that reviewers are under no obligation to examine your supplementary material.

\section{MISSING PROOFS}

The supplementary materials may contain detailed proofs of the results that are missing in the main paper.

\subsection{Proof of Lemma 3}

\textit{In this section, we present the detailed proof of Lemma 3 and then [ ... ]}

\section{ADDITIONAL EXPERIMENTS}

If you have additional experimental results, you may include them in the supplementary materials.

\subsection{The Effect of Regularization Parameter}

\textit{Our algorithm depends on the regularization parameter $\lambda$. Figure 1 below illustrates the effect of this parameter on the performance of our algorithm. As we can see, [ ... ]}

\vfill